\newcommand{\figsize}{3.5}
\preto\subequations{\ifhmode\unskip\fi}% for subequations vspace problem
 \newcommand{\putFrag}[5]{\begin{figure}[t]
                            \centering
                            #4
                            \includegraphics[width=#3in,#5,clip]{figures/#1.eps}
                            \vspace{-2mm}
                            \caption{#2}
                            \label{fig:#1}
                          \end{figure} }
 \newcommand{\putTable}[3]{\begin{table}[t]
                            \centering
                            #3
                            \vspace{2mm}
                            \caption{#2}
                            \vspace{-4mm}
                            \label{tab:#1}
                          \end{table} }
 \newcommand{\capFrag}[2]{}
 \newcommand{\capTable}[2]{}
 \renewcommand{\tilde}{\widetilde}
 \renewcommand{\hat}{\widehat}
 \renewcommand{\bar}{\overline}
 \newcommand{\defn}{\triangleq}
 \newcommand{\hvec}[1]{\ensuremath{\Hat{\boldsymbol{#1}}}}
 \renewcommand{\vec}[1]{\ensuremath{\boldsymbol{#1}}}
 \newcommand{\mat}[1]{\ensuremath{\begin{bmatrix}#1\end{bmatrix}}}
 \newcommand{\norm}[1]{\ensuremath{\| #1 \|}}
 \newcommand{\mc}[1]{\ensuremath{\mathcal{#1}}}
 \newcommand{\Real}{{\mathbb{R}}}
 \newcommand{\tran}{^{\top}}
 \newcommand*\deriv{\mathop{}\!\mathrm{d}}
 \newcommand{\true}{^0}
 \DeclareMathOperator{\E}{\mathbb{E}}
 \DeclareMathOperator{\prox}{prox}
 \newtheorem{theorem}{Theorem}
 \newtheorem{lemma}{Lemma}
 \renewcommand{\eqref}[1]{(\ref{eq:#1})}
 \newcommand{\Eqref}[1]{Equation~(\ref{eq:#1})}
 \newcommand{\Figref}[1]{Figure~\ref{fig:#1}}
 \newcommand{\figref}[1]{Fig.~\ref{fig:#1}}
 \newcommand{\tabref}[1]{Table~\ref{tab:#1}}
 \newcommand{\secref}[1]{Section~\ref{sec:#1}}
 \newcommand{\lemref}[1]{Lemma~\ref{lem:#1}}
 \newcommand{\thmref}[1]{Theorem~\ref{thm:#1}}
 \newcommand{\algref}[1]{Algorithm~\ref{alg:#1}}
 \newcommand{\lineref}[1]{line~\ref{line:#1}}
 \newcounter{comment}[section]
 \newcounter{texthead}[section]
 \newcommand{\map}{_{\textsf{map}}}
 \newcommand{\RED}{_{\textsf{red}}}
 \newcommand{\rhoRED}{\rho\RED}
 \newcommand{\CRED}{C\RED}
 \newcommand{\Jf}{J\vec{f}}
 \newcommand{\rhoTR}{\rho_{\text{\sf TR}}}
 \newcommand{\px}{p_{\text{\sf x}}} 
 \newcommand{\pr}{p_{\text{\sf r}}}
 \newcommand{\pxgr}{p_{\text{\sf x$|$r}}} 
 \newcommand{\fmmse}{\vec{f}_{\text{\sf mmse},\nu}}
 \newcommand{\fhatmmse}{\hvec{f}_{\text{\sf mmse},\nu}}
 \newcommand{\ftheta}{\vec{f}_{\vec{\theta}}}
 \newcommand{\fthetahat}{\vec{f}_{\hvec{\theta}}}
 \newcommand{\pxhat}{\hat{\px}}
 \newcommand{\prhat}{\hat{\pr}}
 \newcommand{\pxgrhat}{\hat{\pxgr}}
 \newcommand{\pxsmooth}{\tilde{\px}}
 \newcommand{\eLHb}{e^{\text{\sf LH,1}}_{\vec{f}}}
 \newcommand{\eLHa}{e^{\text{\sf LH,2}}_{\vec{f}}}
 \newcommand{\eqLH}{\stackrel{\text{\tiny\sf LH}}{=}}
 \newcommand{\pnp}{_{\textsf{pnp}}}
 \newcommand{\admm}{_{\textsf{admm}}}
 \newcommand{\redadmm}{_{\textsf{red-admm}}}
 \newcommand{\redpg}{_{\textsf{red-pg}}}
\begin{document}

\setlength{\arraycolsep}{0.5mm}

\title{Regularization by Denoising: Clarifications and New Interpretations}

\author{Edward T. Reehorst and Philip Schniter, \IEEEmembership{Fellow, IEEE}
        \thanks{E.~T.~Reehorst (email: reehorst.3@osu.edu) 
           and P.~Schniter (email: schniter.1@osu.edu) are with
           the Department of Electrical and Computer Engineering,
           The Ohio State University, 
           Columbus, OH, 43210.
           Their work is supported in part by
           the National Science Foundation under grants
           CCF-1527162 and CCF-1716388
           and the National Institutes of Health under grant
           R01HL135489.}}

\maketitle 

\begin{abstract} 
Regularization by Denoising (RED), as recently proposed by Romano, Elad, and Milanfar, is powerful image-recovery framework that aims to minimize an explicit regularization objective constructed from a plug-in image-denoising function.
Experimental evidence suggests that the RED algorithms are state-of-the-art.
We claim, however, that explicit regularization does not explain the RED algorithms.
In particular, we show that many of the expressions in the paper by Romano et al.\ hold only when the denoiser has a symmetric Jacobian, and we demonstrate that such symmetry does not occur with practical denoisers such as non-local means, BM3D, TNRD, and DnCNN.
To explain the RED algorithms, we propose a new framework called Score-Matching by Denoising (SMD), which aims to match a ``score'' (i.e., the gradient of a log-prior).
We then show tight connections between SMD, kernel density estimation, and constrained minimum mean-squared error denoising.
Furthermore, we interpret the RED algorithms from Romano et al.\ and propose new algorithms with acceleration and convergence guarantees.
Finally, we show that the RED algorithms seek a consensus equilibrium solution, which facilitates a comparison to plug-and-play ADMM.
\end{abstract}

%%%%%%%%%%%%%%%%%%%%%%%%%%%%%%%%%%%%%%%%%%%%%%%%%%%%%%%%%%%%%%%%%%%%%%%%%%%%%%%
\section{Introduction} \label{sec:intro}
 
Consider the problem of recovering a (vectorized) image $\vec{x}\true\in\Real^N$ from noisy linear measurements $\vec{y}\in\Real^M$ of the form
\begin{align}
\vec{y}=\vec{A}\vec{x}\true+\vec{e} 
\label{eq:yAx},
\end{align} 
where $\vec{A}\in\Real^{M\times N}$ is a known linear transformation and $\vec{e}$ is noise.
This problem is of great importance in many applications and has been studied for several decades. 

One of the most popular approaches to image recovery is the ``variational'' approach, where one poses and solves an optimization problem of the form
\begin{align}
\vec{\hat{x}}=\arg\min_{\vec{x}} \big\{ \ell(\vec{x};\vec{y}) + \lambda\rho(\vec{x}) \big\}
\label{eq:variational}.
\end{align}
In \eqref{variational}, $\ell(\vec{x};\vec{y})$ is a loss function that penalizes mismatch to the measurements, $\rho(\vec{x})$ is a regularization term that penalizes mismatch to the image class of interest, and $\lambda>0$ is a design parameter that trades between loss and regularization.
A prime advantage of the variational approach is that, in many cases, efficient optimization methods can be readily applied to \eqref{variational}.

A key question is: How should one choose the loss $\ell(\cdot;\vec{y})$ and regularization $\rho(\cdot)$ in \eqref{variational}?
As discussed in the sequel, the MAP-Bayesian interpretation suggests that they should be chosen in proportion to the negative log-likelihood and negative log-prior, respectively.
The trouble is that accurate prior models for images are lacking.

Recently, a breakthrough was made by Romano, Elad, and Milanfar in \cite{Romano:JIS:17}.
Leveraging the long history (e.g., \cite{Buades:MMS:05,Milanfar:SPM:13}) and recent advances (e.g., \cite{Chen:TPAMI:17,Zhang:TIP:17}) in image denoising algorithms,
they proposed the \emph{regularization by denoising} (RED) framework,
where an explicit regularizer $\rho(\vec{x})$ is constructed from
an image denoiser $\vec{f}:\Real^N\rightarrow\Real^N$ 
using the simple and elegant rule 
\begin{align}
\rhoRED(\vec{x}) = \frac{1}{2}\vec{x}\tran\big(\vec{x}-\vec{f}(\vec{x})\big)
\label{eq:rhoRED0} .
\end{align}
Based on this framework, they proposed several recovery algorithms (based on steepest descent, ADMM, and fixed-point methods, respectively) that yield state-of-the-art performance in deblurring and super-resolution tasks.

In this paper, we provide some clarifications and new interpretations of the excellent RED algorithms from \cite{Romano:JIS:17}.
Our work was motivated by an interesting empirical observation: With many practical denoisers $\vec{f}(\cdot)$, 
the RED algorithms do not minimize the RED variational objective 
``$\ell(\vec{x};\vec{y})+\lambda\rhoRED(\vec{x})$.''
As we establish in the sequel, the RED regularization \eqref{rhoRED0} is justified only for denoisers with symmetric Jacobians, which unfortunately does not cover many state-of-the-art methods such as
non-local means (NLM) \cite{Buades:CVPR:05}, 
BM3D \cite{Dabov:TIP:07}, 
TNRD \cite{Chen:TPAMI:17}, 
and
DnCNN \cite{Zhang:TIP:17}.
In fact, we are able to establish a stronger result: For non-symmetric denoisers, there exists no regularization $\rho(\cdot)$ that explains the RED algorithms from \cite{Romano:JIS:17}.

In light of these (negative) results, there remains the question of how to explain/understand the RED algorithms from \cite{Romano:JIS:17} when used with non-symmetric denoisers. 
In response, we propose a framework called \emph{score-matching by denoising} (SMD), which aims to match the ``score'' (i.e., the gradient of the log-prior) rather than to design any explicit regularizer. 
We then show tight connections between SMD, kernel density estimation \cite{Parzen:AMS:62}, and constrained minimum mean-squared error (MMSE) denoising. 
In addition, we provide new interpretations of the RED-ADMM and RED-FP algorithms proposed in \cite{Romano:JIS:17}, and we propose novel RED algorithms with faster convergence.
Inspired by \cite{Buzzard:JIS:18}, we show that the RED algorithms seek to satisfy a consensus equilibrium condition that allows a direct comparison to the plug-and-play ADMM algorithms from \cite{Venkatakrishnan:GSIP:13}

The remainder of the paper is organized as follows.
In \secref{back} we provide more background on RED and related algorithms such as plug-and-play ADMM \cite{Venkatakrishnan:GSIP:13}.
In \secref{clarifications}, we discuss the impact of Jacobian symmetry on RED and test whether this property holds in practice. 
In \secref{new}, we propose the SMD framework.
In \secref{algorithms}, we present new interpretations of the RED algorithms from \cite{Romano:JIS:17} and new algorithms based on 
accelerated proximal gradient methods. 
In \secref{CE}, we perform an equilibrium analysis of the RED algorithms,
and, in \secref{conclusion}, we conclude.

%%%%%%%%%%%%%%%%%%%%%%%%%%%%%%%%%%%%%%%%%%%%%%%%%%%%%%%%%%%%%%%%%%%%%%%%%%%%%%%
\section{Background} \label{sec:back}

\subsection{The MAP-Bayesian Interpretation} \label{sec:Bayesian}

For use in the sequel, we briefly discuss the Bayesian maximum a posteriori (MAP) estimation framework \cite{Bishop:Book:07}. 
The MAP estimate of $\vec{x}$ from $\vec{y}$ is defined as
\begin{align}
\hvec{x}\map = \arg\max_{\vec{x}} p(\vec{x}|\vec{y}) ,
\end{align}
where $p(\vec{x}|\vec{y})$ 
denotes the probability density of $\vec{x}$ given $\vec{y}$.
Notice that, from Bayes rule
$p(\vec{x}|\vec{y}) = p(\vec{y}|\vec{x}) p(\vec{x}) / p(\vec{y})$
and the monotonically increasing nature of $\ln(\cdot)$,
we can write
\begin{align}
\hvec{x}\map 
&= \arg\min_{\vec{x}} \big\{  -\ln p(\vec{y}|\vec{x}) - \ln p(\vec{x}) \big\}
\label{eq:MAP}.
\end{align}

MAP estimation \eqref{MAP} has a direct connection to variational optimization \eqref{variational}:
the log-likelihood term $-\ln p(\vec{y}|\vec{x})$ 
corresponds to the loss $\ell(\vec{x};\vec{y})$ 
and the log-prior term $-\ln p(\vec{x})$ 
corresponds to the regularization $\lambda\rho(\vec{x})$.
For example, with additive white Gaussian noise (AWGN)
$\vec{e}\sim \mc{N}(\vec{0},\sigma_e^2\vec{I})$, 
the log-likelihood implies a quadratic loss:
\begin{align}
\ell(\vec{x};\vec{y})=\frac{1}{2\sigma_e^2}\|\vec{Ax}-\vec{y}\|^2
\label{eq:quadloss}.
\end{align}
Equivalently, the normalized loss 
$\ell(\vec{x};\vec{y})=\frac{1}{2}\|\vec{Ax}-\vec{y}\|^2$
could be used if $\sigma_e^2$ was absorbed into $\lambda$.

\subsection{ADMM} \label{sec:ADMM}
  
A popular approach to solving \eqref{variational} is through ADMM \cite{Boyd:FTML:11}, which we now review.
Using variable splitting, \eqref{variational} becomes
\begin{align}
\vec{\hat{x}} = \arg\min_{\vec{x}} \big\{ \ell(\vec{x};\vec{y})+\lambda\rho(\vec{v}) \big\} ~~ \text{s.t.}\ \vec{x}=\vec{v} 
\label{eq:variable_splitting}.
\end{align}
Using the augmented Lagrangian, problem \eqref{variable_splitting} can be reformulated as  
\begin{align}
\min_{\vec{x},\vec{v}}\max_{\vec{p}} \Big\{
\ell(\vec{x};\vec{y})+\lambda\rho(\vec{v})+\vec{p}\tran(\vec{x}-\vec{v})+\frac{\beta}{2}\norm{\vec{x}-\vec{v}}^2 \Big\}
\label{eq:augmented_lagrangian_nonscaled}
\end{align}
using Lagrange multipliers (or ``dual'' variables) $\vec{p}$ and a design parameter $\beta>0$.
Using $\vec{u}\triangleq \vec{p}/\beta$, \eqref{augmented_lagrangian_nonscaled} can be simplified to
\begin{align}
\min_{\vec{x},\vec{v}}\max_{\vec{u}} \Big\{
\ell(\vec{x};\vec{y})+\lambda\rho(\vec{v})+\frac{\beta}{2}\norm{\vec{x}-\vec{v}+\vec{u}}^2 - \frac{\beta}{2}\|\vec{u}\|^2 \Big\}
\label{eq:augmented_lagrangian} .
\end{align}
The ADMM algorithm solves \eqref{augmented_lagrangian} by alternating the minimization of $\vec{x}$ and $\vec{v}$ with gradient ascent of $\vec{u}$, as specified in \algref{ADMM}.
ADMM is known to converge under convex $\ell(\cdot;\vec{y})$ and $\rho(\cdot)$, and other mild conditions (see \cite{Boyd:FTML:11}).

 %%%%%%%%%%%%%%%%%%%%%%%%%%%%%%%%%%%%%%%%%%%%%%%%%%%%%%%
 \begin{algorithm}[t]
 \caption{ADMM \cite{Boyd:FTML:11}}
 \begin{algorithmic}[1] \label{alg:ADMM}
 \REQUIRE{$\ell(\cdot;\vec{y}),\rho(\cdot),
           \beta,\lambda,
           \vec{v}_0,\vec{u}_0$,
           and $K$} \label{line:ADMM_init}
 \FOR{$k = 1,2,\dots,K$}
 	\STATE{$\vec{x}_{k}=\arg\min_{\vec{x}}\{\ell(\vec{x};\vec{y})+\frac{\beta}{2}\norm{\vec{x}-\vec{v}_{k-1}+\vec{u}_{k-1}}^2\}$}\label{line:ADMM_x_update}
 	\STATE{$\vec{v}_k = \arg\min_{\vec{v}}\{\lambda\rho(\vec{v})+\frac{\beta}{2}\norm{\vec{v}-\vec{x}_k-\vec{u}_{k-1}}^2\}$}\label{line:ADMM_v_update}
 	\STATE{$\vec{u}_k = \vec{u}_{k-1}+\vec{x}_k-\vec{v}_k$}\label{line:ADMM_u_update}
 \ENDFOR
 \STATE{Return $\vec{x}_K$}
 \end{algorithmic}
 \end{algorithm}
%%%%%%%%%%%%%%%%%%%%%%%%%%%%%%%%%%%%%%%%%%%%%%%%%%%%%%%  

\subsection{Plug-and-Play ADMM}

Importantly, \lineref{ADMM_v_update} of \algref{ADMM} can be recognized as variational \emph{denoising} of $\vec{x}_k + \vec{u}_{k-1}$ using 
regularization $\lambda\rho(\vec{x})$ and
quadratic loss $\ell(\vec{x};\vec{r})=\frac{1}{2\nu}\|\vec{x}-\vec{r}\|^2$,
where $\vec{r}=\vec{x}_k+\vec{u}_{k-1}$ at iteration $k$.
By ``denoising,'' we mean recovering $\vec{x}\true$ from noisy measurements $\vec{r}$ of the form
\begin{align}
\vec{r}=\vec{x}\true+\vec{e}, \quad \vec{e}\sim \mc{N}(\vec{0},\nu\vec{I})
\label{eq:rxe} ,
\end{align} 
for some variance $\nu>0$.

Image denoising has been studied for decades (see, e.g., the overviews \cite{Buades:MMS:05,Milanfar:SPM:13}), with the result that high performance methods are now readily available.
Today's state-of-the-art denoisers include those based on 
image-dependent filtering algorithms (e.g., BM3D \cite{Dabov:TIP:07}) or
deep neural networks (e.g., TNRD \cite{Chen:TPAMI:17}, DnCNN \cite{Zhang:TIP:17}).
Most of these denoisers are not variational in nature, i.e., they are not based on any explicit regularizer $\lambda\rho(\vec{x})$. 
 
Leveraging the denoising interpretation of ADMM, Venkatakrishnan, Bouman, and Wolhberg \cite{Venkatakrishnan:GSIP:13} proposed to replace \lineref{ADMM_v_update} of \algref{ADMM} with a call to a sophisticated image denoiser, such as BM3D, and dubbed their approach \emph{Plug-and-Play} (PnP) ADMM.
Numerical experiments show that PnP-ADMM works very well in most cases. 
However, when the denoiser used in PnP-ADMM comes with no explicit regularization $\rho(\vec{x})$, it is not clear what objective PnP-ADMM is minimizing, making PnP-ADMM convergence more difficult to characterize.
Similar PnP algorithms have been proposed using primal-dual methods \cite{Ono:SPL:17} and FISTA \cite{Kamilov:SPL:17} in place of ADMM.

Approximate message passing (AMP) algorithms \cite{Donoho:PNAS:09} also perform denoising at each iteration.
In fact, when $\vec{A}$ is large and i.i.d.\ Gaussian, AMP constructs an internal variable statistically equivalent to $\vec{r}$ in \eqref{rxe} \cite{Bayati:TIT:11}.
While the earliest instances of AMP assumed separable denoising (i.e., $[\vec{f}(\vec{x})]_n=f(x_n)~\forall n$ for some $f$) later instances, like \cite{Som:TSP:12,Donoho:TIT:13b}, considered non-separable denoising.
The paper \cite{Metzler:TIT:16} by Metzler, Maleki, and Baraniuk proposed to plug an image-specific denoising algorithm, like BM3D, into AMP.
Vector AMP, which extends AMP to the broader class of ``right rotationally invariant'' random matrices, was proposed in \cite{Rangan:ISIT:17}, and VAMP with image-specific denoising was proposed in \cite{Schniter:BASP:17}.
Rigorous analyses of AMP and VAMP under non-separable denoisers were performed in \cite{Berthier:17} and \cite{Fletcher:NIPS:18}, respectively.

\subsection{Regularization by Denoising (RED)} \label{sec:RED}

As discussed in the Introduction,
Romano, Elad, and Milanfar \cite{Romano:JIS:17} proposed a radically new way to exploit an image denoiser, which they call \emph{regularization by denoising} (RED).
Given an arbitrary image denoiser $\vec{f}:\Real^N\rightarrow\Real^N$, they proposed to construct an explicit regularizer of the form
\begin{align}
\rhoRED(\vec{x}) 
\defn \frac{1}{2}\vec{x}\tran(\vec{x}-\vec{f}(\vec{x}))
\label{eq:rhoRED}
\end{align}
to use within the variational framework \eqref{variational}. 
The advantage of using an explicit regularizer is that a wide variety of optimization algorithms can be used to solve \eqref{variational} and their convergence can be tractably analyzed.

In \cite{Romano:JIS:17}, numerical evidence is presented to show that image denoisers $\vec{f}(\cdot)$ are \emph{locally homogeneous} (LH), i.e., 
\begin{align}
(1+\epsilon)\vec{f}(\vec{x}) 
= \vec{f}\big((1+\epsilon)\vec{x}\big)~\forall \vec{x}
\label{eq:LH}
\end{align}
for sufficiently small $\epsilon\in\Real\setminus 0$.
For such denoisers, Romano et al.\ claim \cite[Eq.(28)]{Romano:JIS:17} that $\rhoRED(\cdot)$ obeys the gradient rule 
\begin{align}
\nabla \rhoRED(\vec{x}) 
&= \vec{x}-\vec{f}(\vec{x})
\label{eq:gradREDromano} .
\end{align}
If $\nabla \rhoRED(\vec{x})=\vec{x}-\vec{f}(\vec{x})$, then any minimizer $\hvec{x}$ of the variational objective under quadratic loss,
\begin{align}
\frac{1}{2\sigma^2}\|\vec{Ax}-\vec{y}\|^2 + \lambda \rhoRED(\vec{x})
&\defn \CRED(\vec{x})
\label{eq:CRED},
\end{align}
must yield $\nabla\CRED(\hvec{x})=\vec{0}$, i.e., must obey
\begin{align}
\vec{0} 
&= \frac{1}{\sigma^2}\vec{A}\tran(\vec{A}\hvec{x}-\vec{y}) + \lambda(\hvec{x}-\vec{f}(\hvec{x}))
\label{eq:fpRED} .
\end{align}
Based on this line of reasoning, Romano et al.\ proposed several iterative algorithms that find an $\hvec{x}$ satisfying the fixed-point condition \eqref{fpRED}, which we will refer to henceforth as ``RED algorithms.''

%%%%%%%%%%%%%%%%%%%%%%%%%%%%%%%%%%%%%%%%%%%%%%%%%%%%%%%%%%%%%%%%%%%%%%%%%%%%%%%
\section{Clarifications on RED} \label{sec:clarifications}

In this section, we first show that the gradient expression \eqref{gradREDromano} holds if and only if the denoiser $\vec{f}(\cdot)$ is LH and has Jacobian symmetry (JS).
We then establish that many popular denoisers lack JS, such as 
the median filter (MF) \cite{Huang:TASSP:79},
non-local means (NLM) \cite{Buades:CVPR:05},
BM3D \cite{Dabov:TIP:07}, 
TNRD \cite{Chen:TPAMI:17}, 
and
DnCNN \cite{Zhang:TIP:17}.
For such denoisers, the RED algorithms cannot be explained by $\rhoRED(\cdot)$ in \eqref{rhoRED}.
We also show a more general result: When a denoiser lacks JS, there exists no regularizer $\rho(\cdot)$ whose gradient expression matches \eqref{gradREDromano}.
Thus, the problem is not the specific form of $\rhoRED(\cdot)$ in \eqref{rhoRED} but rather the broader pursuit of explicit regularization.

\subsection{Preliminaries} \label{sec:prelim}

We first state some definitions and assumptions.
In the sequel, we denote 
the $i$th component of $\vec{f}(\vec{x})$ by $f_i(\vec{x})$,
the gradient of $f_i(\cdot)$ at $\vec{x}$ by
\begin{align}
\nabla f_i(\vec{x})
&\defn \begin{bmatrix}
\tfrac{\partial f_i(\vec{x})}{\partial x_1} 
& \cdots 
& \tfrac{\partial f_i(\vec{x})}{\partial x_N} 
\end{bmatrix}\tran 
\label{eq:gradient} ,
\end{align}
and the Jacobian of $\vec{f}(\cdot)$ at $\vec{x}$ by
\begin{align}
\Jf(\vec{x}) 
&\defn \mat{
\tfrac{\partial f_1(\vec{x})}{\partial x_1} 
& \tfrac{\partial f_1(\vec{x})}{\partial x_2}
& \dots 
& \tfrac{\partial f_1(\vec{x})}{\partial x_N} \\
\tfrac{\partial f_2(\vec{x})}{\partial x_1} 
& \tfrac{\partial f_2(\vec{x})}{\partial x_2} 
& \dots 
& \tfrac{\partial f_2(\vec{x})}{\partial x_N} \\
\vdots & \vdots & \ddots & \vdots\\
\tfrac{\partial f_N(\vec{x})}{\partial x_1} 
& \tfrac{\partial f_N(\vec{x})}{\partial x_2} 
& \dots 
& \tfrac{\partial f_N(\vec{x})}{\partial x_N} }
\label{eq:Jacobian} .
\end{align}

Without loss of generality, we take $[0,255]^N\subset \Real^N$ to be the set of possible images.
A given denoiser $\vec{f}(\cdot)$ may involve decision boundaries $\mc{D}\subset [0,255]^N$ at which its behavior changes suddenly.  
We assume that these boundaries are a closed set of measure zero and work instead with the open set $\mc{X}\defn (0,255)^N\setminus \mc{D}$, which contains almost all images.

We furthermore assume that $\vec{f}:\Real^N\rightarrow\Real^N$ is \emph{differentiable} on $\mc{X}$, which means \cite[p.212]{Rudin:Book:76} that, for any $\vec{x}\in\mc{X}$, there exists a matrix $\vec{J}\in\Real^{N\times N}$ for which 
\begin{align}
\lim_{\vec{w}\rightarrow \vec{0}} \frac{\|\vec{f}(\vec{x}+\vec{w})-\vec{f}(\vec{x}) - \vec{J}\vec{w}\|}{\|\vec{w}\|} = 0 
\label{eq:differentiable} .
\end{align}
When $\vec{J}$ exists, it can be shown \cite[p.216]{Rudin:Book:76} that $\vec{J}=\Jf(\vec{x})$.

\subsection{The RED Gradient} \label{sec:gradRED}

We first recall a result that was established in \cite{Romano:JIS:17}.
\begin{lemma}[Local homogeneity \cite{Romano:JIS:17}]\label{lem:LH}
Suppose that denoiser $\vec{f}(\cdot)$ is locally homogeneous. 
Then $[\Jf(\vec{x})]\vec{x} = \vec{f}(\vec{x})$.
\end{lemma}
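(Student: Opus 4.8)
The plan is to exploit the differentiability of $\vec{f}(\cdot)$ on $\mc{X}$ stated in \eqref{differentiable} by approaching $\vec{0}$ along the \emph{radial} direction $\vec{w}=\epsilon\vec{x}$, and then to use local homogeneity \eqref{LH} to collapse the first-order remainder. Fix $\vec{x}\in\mc{X}$; since $\mc{X}=(0,255)^N\setminus\mc{D}$ we have $\vec{x}\neq\vec{0}$, hence $\|\vec{x}\|>0$. For $\epsilon\in\Real\setminus 0$ sufficiently small, put $\vec{w}=\epsilon\vec{x}$, which tends to $\vec{0}$ as $\epsilon\to 0$ and (for small enough $\epsilon$) keeps $\vec{x}+\vec{w}\in\mc{X}$, so that $\Jf(\vec{x})$ exists.

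The key step is to substitute $\vec{w}=\epsilon\vec{x}$ into the difference quotient in \eqref{differentiable}. By local homogeneity, $\vec{f}(\vec{x}+\vec{w})=\vec{f}((1+\epsilon)\vec{x})=(1+\epsilon)\vec{f}(\vec{x})$, so the numerator becomes $\bigl\|(1+\epsilon)\vec{f}(\vec{x})-\vec{f}(\vec{x})-\epsilon\,[\Jf(\vec{x})]\vec{x}\bigr\| = |\epsilon|\,\bigl\|\vec{f}(\vec{x})-[\Jf(\vec{x})]\vec{x}\bigr\|$, while the denominator is $\|\vec{w}\|=|\epsilon|\,\|\vec{x}\|$. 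Thus the difference quotient equals the constant $\bigl\|\vec{f}(\vec{x})-[\Jf(\vec{x})]\vec{x}\bigr\|/\|\vec{x}\|$ for every sufficiently small $\epsilon\neq 0$. Because \eqref{differentiable} asserts that the limit is $0$ along \emph{every} approach to $\vec{0}$, in particular along this radial one, the constant must be $0$; since $\|\vec{x}\|>0$, this yields $[\Jf(\vec{x})]\vec{x}=\vec{f}(\vec{x})$.

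The only points requiring care are that $(1+\epsilon)\vec{x}$ stays in the region where \eqref{LH} is asserted and that $\vec{x}+\vec{w}$ stays in $\mc{X}$ (both handled by shrinking $|\epsilon|$), and that $\vec{x}\neq\vec{0}$ so we may divide by $\|\vec{x}\|$ (guaranteed by $\vec{x}\in\mc{X}$). I do not expect a genuine obstacle here; the one conceptual move is recognizing that the full (directional-independent) differentiability limit may be specialized to the radial direction, after which local homogeneity forces the first-order remainder to be an $\epsilon$-independent constant and hence zero. An equivalent, slightly slicker route is to differentiate the identity $(1+\epsilon)\vec{f}(\vec{x})=\vec{f}\bigl((1+\epsilon)\vec{x}\bigr)$ with respect to $\epsilon$ at $\epsilon=0$, using the chain rule on the right-hand side to get $[\Jf(\vec{x})]\vec{x}$; this presupposes the same differentiability fact, so I would present the difference-quotient argument as the primary proof.
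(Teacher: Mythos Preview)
Your proposal is correct and follows essentially the same argument as the paper: specialize the differentiability limit \eqref{differentiable} to the radial direction $\vec{w}=\epsilon\vec{x}$, apply local homogeneity to collapse the numerator, and conclude that the resulting $\epsilon$-independent quotient must vanish. Your version is slightly more careful about the side conditions ($\vec{x}\neq\vec{0}$, $(1+\epsilon)\vec{x}\in\mc{X}$), but the substance is identical.
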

\begin{proof}
Our proof is based on differentiability and avoids the need to define a directional derivative.
From \eqref{differentiable}, we have 
\begin{align}
0 
&= \lim_{\epsilon\rightarrow 0} \frac{\|\vec{f}(\vec{x}+\epsilon\vec{x})-\vec{f}(\vec{x}) - [\Jf(\vec{x})]\vec{x}\epsilon\|}{\|\epsilon\vec{x}\|} 
~\forall \vec{x}\in\mc{X}\\
&= \lim_{\epsilon\rightarrow 0} \frac{\|(1+\epsilon)\vec{f}(\vec{x})-\vec{f}(\vec{x}) - [\Jf(\vec{x})]\vec{x}\epsilon\|}{\|\epsilon\vec{x}\|} 
~\forall \vec{x}\in\mc{X}
\label{eq:LH2}\\
&= \lim_{\epsilon\rightarrow 0} \frac{\|\vec{f}(\vec{x}) - [\Jf(\vec{x})]\vec{x}\|}{\|\vec{x}\|} 
~\forall \vec{x}\in\mc{X}
\label{eq:LH3} ,
\end{align}
where \eqref{LH2} follows from local homogeneity \eqref{LH}.
\Eqref{LH3} implies that $[\Jf(\vec{x})]\vec{x} = \vec{f}(\vec{x})~\forall \vec{x}\in\mc{X}$.
\end{proof}

We now state one of the main results of this section.
\begin{lemma}[RED gradient] \label{lem:gradRED}
For $\rhoRED(\cdot)$ defined in \eqref{rhoRED},
\begin{align}
\nabla \rhoRED(\vec{x})
&= \vec{x} - \frac{1}{2} \vec{f}(\vec{x}) 
   - \frac{1}{2} [\Jf(\vec{x})]\tran\vec{x}
\label{eq:gradRED} .
\end{align}
\end{lemma}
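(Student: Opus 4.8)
The plan is to compute $\nabla\rhoRED$ directly from the definition by splitting it into a pure-quadratic piece and a bilinear piece. Write $\rhoRED(\vec{x}) = \tfrac{1}{2}\norm{\vec{x}}^2 - \tfrac{1}{2}\vec{x}\tran\vec{f}(\vec{x})$. The first term is smooth with gradient $\vec{x}$, so all the work is in differentiating $g(\vec{x})\defn\tfrac{1}{2}\vec{x}\tran\vec{f}(\vec{x}) = \tfrac{1}{2}\sum_{i=1}^N x_i f_i(\vec{x})$.

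First I would fix $\vec{x}\in\mc{X}$, where $\vec{f}$ is differentiable by assumption, so every partial $\partial f_i(\vec{x})/\partial x_j$ exists and equals the $(i,j)$ entry of $\Jf(\vec{x})$. Then I would differentiate $g$ componentwise: by the product rule, $\partial g(\vec{x})/\partial x_j = \tfrac{1}{2}\big(f_j(\vec{x}) + \sum_{i=1}^N x_i\,\partial f_i(\vec{x})/\partial x_j\big)$. Collecting the $N$ partials into the gradient vector, the first terms assemble into $\tfrac{1}{2}\vec{f}(\vec{x})$, while the sums $\sum_i x_i\,\partial f_i(\vec{x})/\partial x_j$ — here the summed index $i$ labels the \emph{output} component of $\vec{f}$, i.e., the row of $\Jf$ — assemble into $\tfrac{1}{2}[\Jf(\vec{x})]\tran\vec{x}$. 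Hence $\nabla g(\vec{x}) = \tfrac{1}{2}\vec{f}(\vec{x}) + \tfrac{1}{2}[\Jf(\vec{x})]\tran\vec{x}$, and subtracting this from $\nabla\big(\tfrac{1}{2}\norm{\vec{x}}^2\big) = \vec{x}$ yields \eqref{gradRED} for all $\vec{x}\in\mc{X}$.

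The only point that requires care — and it is precisely the conceptual crux the rest of the section builds on — is that the Jacobian appears \emph{transposed}: in $\vec{x}\tran\vec{f}(\vec{x})$ the variable enters both as the explicit left factor and inside $\vec{f}$, so differentiating through $\vec{f}$ contracts $\vec{x}$ against the first (output) index of $\Jf$, which is the transpose action. I would therefore be explicit about the index bookkeeping at that step; everything else is routine. (One could alternatively argue from the limit definition \eqref{differentiable}, substituting $\vec{f}(\vec{x}+\vec{w}) = \vec{f}(\vec{x}) + [\Jf(\vec{x})]\vec{w} + o(\norm{\vec{w}})$ into $\rhoRED(\vec{x}+\vec{w})$ and reading off the term linear in $\vec{w}$, but the componentwise product rule is cleaner.)
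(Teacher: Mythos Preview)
Your proposal is correct and follows essentially the same approach as the paper: both compute $\partial\rhoRED/\partial x_j$ componentwise via the product rule, identify $\sum_i x_i\,\partial f_i/\partial x_j$ as the $j$th entry of $[\Jf(\vec{x})]\tran\vec{x}$, and assemble into the stated gradient. Your preliminary split $\rhoRED(\vec{x})=\tfrac{1}{2}\|\vec{x}\|^2-\tfrac{1}{2}\vec{x}\tran\vec{f}(\vec{x})$ is a slightly cleaner bookkeeping choice than the paper's all-at-once expansion, but the underlying computation is identical.
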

\begin{proof}
For any $\vec{x}\in\mc{X}$ and $n=1,\dots,N$,
\begin{align}
\lefteqn{
\frac{\partial \rhoRED(\vec{x})}{\partial x_n}
= \frac{\partial}{\partial x_n} \frac{1}{2}
\sum_{i=1}^N \big( x_i^2 - x_i f_i(\vec{x}) \big) }\\
&= \frac{1}{2} \frac{\partial}{\partial x_n} \left(
x_n^2 
- x_n f_n(\vec{x}) 
+ \sum_{i\neq n} x_i^2 
- \sum_{i\neq n} x_i f_i(\vec{x}) \right)\\
&= \frac{1}{2} \left( 
2x_n 
- f_n(\vec{x}) 
- x_n 
\frac{\partial f_n(\vec{x})}{\partial x_n} 
- \sum_{i\neq n} x_i 
\frac{\partial f_i(\vec{x})}{\partial x_n} 
\right) \\
&= x_n - \frac{1}{2} f_n(\vec{x}) - \frac{1}{2} \sum_{i=1}^N x_i 
\frac{\partial f_i(\vec{x})}{\partial x_n}  
\label{eq:partialRED} \\
&= x_n - \frac{1}{2} f_n(\vec{x}) 
- \frac{1}{2} \left[ [\Jf(\vec{x})]\tran\vec{x} \right]_n ,
\end{align}
using the definition of $\Jf(\vec{x})$ from \eqref{Jacobian}.
Collecting $\{\frac{\partial \rhoRED(\vec{x})}{\partial x_n}\}_{n=1}^N$ into the gradient vector \eqref{gradREDromano} yields \eqref{gradRED}. 
\end{proof}

Note that the gradient expression \eqref{gradRED} differs from \eqref{gradREDromano}.

\begin{lemma}[Clarification on \eqref{gradREDromano}] \label{lem:gradREDromano}
Suppose that the denoiser $\vec{f}(\cdot)$ is locally homogeneous. 
Then the RED gradient expression \eqref{gradREDromano} 
holds if and only if $\Jf(\vec{x})=[\Jf(\vec{x})]\tran$.
\end{lemma}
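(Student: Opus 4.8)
The plan is to read off the result by comparing the exact gradient formula of \lemref{gradRED} with the claimed one. \lemref{gradRED} supplies the \emph{unconditional} identity $\nabla\rhoRED(\vec{x}) = \vec{x} - \tfrac12\vec{f}(\vec{x}) - \tfrac12[\Jf(\vec{x})]\tran\vec{x}$, while \lemref{LH} (which uses only local homogeneity, assumed here) supplies $[\Jf(\vec{x})]\vec{x} = \vec{f}(\vec{x})$. Subtracting the claimed expression \eqref{gradREDromano} from the \lemref{gradRED} identity shows that \eqref{gradREDromano} holds at $\vec{x}$ exactly when $[\Jf(\vec{x})]\tran\vec{x} = \vec{f}(\vec{x})$, and after using \lemref{LH} on the right-hand side this is equivalent to $\big([\Jf(\vec{x})] - [\Jf(\vec{x})]\tran\big)\vec{x} = \vec{0}$.

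The ``if'' direction is then immediate: if $\Jf(\vec{x}) = [\Jf(\vec{x})]\tran$ then $[\Jf(\vec{x})]\tran\vec{x} = [\Jf(\vec{x})]\vec{x} = \vec{f}(\vec{x})$ by \lemref{LH}, and substituting this back into the \lemref{gradRED} identity merges the two $\tfrac12$ terms into $\vec{f}(\vec{x})$, giving $\nabla\rhoRED(\vec{x}) = \vec{x} - \vec{f}(\vec{x})$.

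For ``only if'' I would not argue pointwise, since $\big([\Jf(\vec{x})] - [\Jf(\vec{x})]\tran\big)\vec{x} = \vec{0}$ at an isolated $\vec{x}$ is strictly weaker than symmetry there; instead I would use that \eqref{gradREDromano} is assumed to hold throughout the open set $\mc{X}$. Rearranging \eqref{gradREDromano} gives $\vec{f}(\vec{x}) = \nabla\phi(\vec{x})$ for the scalar potential $\phi(\vec{x}) \defn \tfrac12\|\vec{x}\|^2 - \rhoRED(\vec{x})$, so $\vec{f}$ is a gradient (conservative) field on $\mc{X}$; hence $\Jf(\vec{x})$ is the Hessian $\nabla^2\phi(\vec{x})$ and is symmetric by equality of mixed second partials. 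Equivalently, one can differentiate the relation $\big([\Jf(\vec{x})] - [\Jf(\vec{x})]\tran\big)\vec{x} = \vec{0}$ in $\vec{x}$ and use that local homogeneity forces each $\partial f_i/\partial x_j$ to be homogeneous of degree zero, so its derivative along the ray through $\vec{x}$ vanishes, which upgrades the displayed identity to pointwise symmetry.

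I expect the ``only if'' step to be the only real obstacle. It is where the global validity of \eqref{gradREDromano} --- as opposed to validity at a single image --- is genuinely needed, and it brushes against a mild regularity issue: invoking symmetry of the Hessian $\nabla^2\phi$ (or differentiating $\Jf$) asks $\vec{f}$ to be slightly smoother, say continuously differentiable, than the bare differentiability posited in \secref{prelim}. The cleanest remedy is to either record this mild extra hypothesis explicitly or to carry out the homogeneity-based differentiation only along the single direction $\vec{x}$, which is all that the argument actually consumes.
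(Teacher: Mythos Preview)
Your ``if'' direction is exactly the paper's argument. For the ``only if'' direction you are actually more careful than the paper: the paper's proof simply asserts that if $\Jf(\vec{x})\neq[\Jf(\vec{x})]\tran$ then \eqref{gradRED} differs from \eqref{gradREDromano}, without addressing your (correct) observation that the pointwise condition $\big(\Jf(\vec{x})-[\Jf(\vec{x})]\tran\big)\vec{x}=\vec{0}$ is strictly weaker than symmetry of $\Jf(\vec{x})$. Your potential-function argument --- write $\vec{f}=\nabla\phi$ with $\phi(\vec{x})=\tfrac{1}{2}\|\vec{x}\|^2-\rhoRED(\vec{x})$ and conclude that $\Jf(\vec{x})=\nabla^2\phi(\vec{x})$ is symmetric --- is precisely the mechanism the paper deploys later in \thmref{impossible}, so you are in effect importing that theorem back into the lemma. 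That is a legitimate and arguably tighter proof of the ``only if'' direction, subject to the mild extra smoothness you already flag.

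Your alternative route --- differentiate $\big(\Jf(\vec{x})-[\Jf(\vec{x})]\tran\big)\vec{x}=\vec{0}$ and invoke degree-zero homogeneity of the Jacobian entries --- does not close as stated. Euler's relation for a degree-zero function $M_{ij}$ gives $\sum_k x_k\,\partial M_{ij}/\partial x_k=0$, whereas differentiating the displayed identity in $x_k$ produces $\sum_j x_j\,\partial M_{ij}/\partial x_k + M_{ik}=0$; the index patterns do not match, so you cannot conclude $M_{ik}=0$ from homogeneity alone. I would drop that sentence and rely on the potential-function argument.
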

\begin{proof}
If $\Jf(\vec{x})=[\Jf(\vec{x})]\tran$, then the last term in \eqref{gradRED} becomes $-\frac{1}{2} [\Jf(\vec{x})]\vec{x}$, which equals $-\frac{1}{2}\vec{f}(x)$ by \lemref{LH}, in which case \eqref{gradRED} agrees with \eqref{gradREDromano}.
But if $\Jf(\vec{x})\neq[\Jf(\vec{x})]\tran$, then \eqref{gradRED} differs from \eqref{gradREDromano}.
\end{proof}

\subsection{Impossibility of Explicit Regularization} \label{sec:impossibility}

For denoisers $\vec{f}(\cdot)$ that lack Jacobian symmetry (JS), \lemref{gradREDromano} establishes that the gradient expression \eqref{gradREDromano} does not hold. 
Yet \eqref{gradREDromano} leads to the fixed-point condition \eqref{fpRED} on which all RED algorithms in \cite{Romano:JIS:17} are based.
The fact that these algorithms work well in practice suggests that 
``$\nabla \rho(\vec{x}) = \vec{x}-\vec{f}(\vec{x})$''
is a desirable property for a regularizer $\rho(\vec{x})$ to have.
But the regularization $\rhoRED(\vec{x})$ in \eqref{rhoRED} does not lead to this property when $\vec{f}(\cdot)$ lacks JS.
Thus an important question is:
\begin{quote}\em 
Does there exist some other regularization $\rho(\cdot)$ for which $\nabla\rho(\vec{x})=\vec{x}-\vec{f}(\vec{x})$ when $\vec{f}(\cdot)$ is non-JS?
\end{quote}
The following theorem provides the answer.

\begin{theorem}[Impossibility] \label{thm:impossible}
Suppose that denoiser $\vec{f}(\cdot)$ has a non-symmetric Jacobian. 
Then there exists no regularization $\rho(\cdot)$ for which $\nabla\rho(\vec{x})=\vec{x}-\vec{f}(\vec{x})$.
\end{theorem}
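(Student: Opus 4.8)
The plan is to argue by contradiction, exploiting the classical fact that a gradient field must have a symmetric Jacobian (equivalently, that mixed second partials commute). Suppose, for contradiction, that some $\rho(\cdot)$ satisfies $\nabla\rho(\vec{x}) = \vec{x} - \vec{f}(\vec{x})$ for all $\vec{x}$ in the open set $\mc{X}$ on which $\vec{f}$ is differentiable. Then $\nabla\rho$ is itself differentiable on $\mc{X}$, so $\rho$ is twice differentiable there, and its Hessian is $\nabla^2\rho(\vec{x}) = \vec{I} - \Jf(\vec{x})$. Since a Hessian is always symmetric (Clairaut/Schwarz, valid because the second partials exist and we may assume enough regularity from differentiability of $\nabla\rho$), we get $\vec{I} - \Jf(\vec{x}) = (\vec{I} - \Jf(\vec{x}))\tran$, i.e.\ $\Jf(\vec{x}) = [\Jf(\vec{x})]\tran$ for every $\vec{x}\in\mc{X}$. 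This contradicts the hypothesis that $\vec{f}$ has a non-symmetric Jacobian (at some point of $\mc{X}$), completing the proof.

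The key steps in order are: (1) negate the conclusion, obtaining a candidate $\rho$; (2) observe that $\nabla\rho = \vec{x} - \vec{f}(\vec{x})$ forces $\rho\in C^2(\mc{X})$ because the right-hand side is differentiable on $\mc{X}$; (3) compute the Hessian entrywise, $\partial^2\rho/\partial x_m\partial x_n = \delta_{mn} - \partial f_n/\partial x_m$, and invoke symmetry of mixed partials to equate the $(m,n)$ and $(n,m)$ entries; (4) conclude $\partial f_n/\partial x_m = \partial f_m/\partial x_n$ everywhere on $\mc{X}$, i.e.\ Jacobian symmetry, contradicting the assumption. One should state carefully that "non-symmetric Jacobian" means $\Jf(\vec{x})\neq[\Jf(\vec{x})]\tran$ for at least one $\vec{x}\in\mc{X}$, so a single point of asymmetry suffices to defeat the existence of $\rho$.

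The one delicate point — the main obstacle, such as it is — is the regularity bookkeeping in step (2)–(3): Clairaut's theorem requires a bit more than bare existence of first partials of $\vec{f}$, so one wants to note that differentiability of $\vec{f}$ on $\mc{X}$ (hence continuity of $\nabla\rho$) together with existence of $\Jf$ gives continuity of the second partials of $\rho$, or else invoke the symmetric-Hessian statement in the form that needs only the existence of the derivative of $\nabla\rho$. An alternative, fully elementary route that sidesteps Clairaut entirely is a line-integral argument: if $\nabla\rho = \vec{g}$ for $\vec{g}(\vec{x}) \defn \vec{x}-\vec{f}(\vec{x})$, then for any closed rectangular loop in $\mc{X}$ the circulation $\oint \vec{g}\tran \deriv\vec{x}$ vanishes; differentiating the vanishing circulation around infinitesimal loops in the $(x_m,x_n)$-plane recovers $\partial g_n/\partial x_m = \partial g_m/\partial x_n$, hence $\Jf$ symmetric. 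Either way the contradiction is the same; I would present the Hessian version as the main proof and perhaps remark on the line-integral viewpoint.
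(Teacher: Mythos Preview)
Your proof is correct and uses essentially the same idea as the paper: a vector field can be a gradient only if its Jacobian is symmetric, so non-symmetry of $\Jf$ rules out any $\rho$ with $\nabla\rho(\vec{x})=\vec{x}-\vec{f}(\vec{x})$. The paper packages this by invoking the textbook equivalence between conservative vector fields and the existence of a potential (and the fact that conservative fields have symmetric Jacobians), whereas you argue directly via symmetry of the Hessian; your line-integral alternative is precisely the conservative-field viewpoint the paper cites.
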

\begin{proof}
To prove the theorem, we view $\vec{f}:\mc{X}\rightarrow\Real^N$ as a vector field.
Theorem~4.3.8 in \cite{Kantorovitz:Book:16} says that a vector field $\vec{f}$ is \emph{conservative} if and only if there exists a continuously differentiable potential $\bar{\rho}:\mc{X}\rightarrow\Real$ for which $\nabla\bar{\rho}=\vec{f}$.
Furthermore, Theorem~4.3.10 in \cite{Kantorovitz:Book:16} says that if $\vec{f}$ is conservative, then the Jacobian $J\vec{f}$ is symmetric.
Thus, by the contrapositive, if the Jacobian $J\vec{f}$ is \emph{not} symmetric, then no such potential $\bar{\rho}$ exists.

To apply this result to our problem, we define
\begin{align}
\rho(\vec{x}) \defn \frac{1}{2} \|\vec{x}\|^2 - \bar{\rho}(\vec{x}) 
\end{align}
and notice that
\begin{align}
\nabla\rho(\vec{x}) = \vec{x} - \nabla\bar{\rho}(\vec{x})
= \vec{x} - \vec{f}(\vec{x}) 
\label{eq:rhobar}.
\end{align}
Thus, if $J\vec{f}(\vec{x})$ is non-symmetric, then $J[\vec{x}-\vec{f}(\vec{x})]=\vec{I}-J\vec{f}(\vec{x})$ is non-symmetric, which means that there exists no $\rho$ for which \eqref{rhobar} holds.
\end{proof}
Thus, the problem is not the specific form of $\rhoRED(\cdot)$ in \eqref{rhoRED} but rather the broader pursuit of explicit regularization.
We note that the notion of conservative vector fields was discussed in \cite[App. A]{Sreehari:TCI:16} in the context of PnP algorithms, whereas here we discuss it in the context of RED.

\subsection{Analysis of Jacobian Symmetry} \label{sec:JS}

The previous sections motivate an important question: 
Do commonly-used image denoisers have sufficient JS?

For some denoisers, JS can be studied analytically.
For example, consider the ``transform domain thresholding'' (TDT) denoisers of the form
\begin{align}
\vec{f}(\vec{x}) 
\defn 
\vec{W}\tran\vec{g}(\vec{Wx})
\label{eq:fTD} ,
\end{align}
where $\vec{g}(\cdot)$ performs componentwise (e.g., soft or hard) thresholding
and $\vec{W}$ is some transform, as occurs in the context of 
wavelet shrinkage \cite{Donoho:Bio:94},
with or without cycle-spinning \cite{Coifman:Chap:95}.
Using $g_n'(\cdot)$ to denote the derivative of $g_n(\cdot)$, we have
\begin{align}
\frac{\partial f_n(\vec{x})}{\partial x_q}
&= \sum_{i=1}^N w_{in} g_i'\Bigg(\sum_{j=1}^N w_{ij} x_j\Bigg) w_{iq}
= \frac{\partial f_q(\vec{x})}{\partial x_n} ,
\end{align}
and so the Jacobian of $\vec{f}(\cdot)$ is perfectly symmetric.

Another class of denoisers with perfectly symmetric Jacobians are those that produce MAP or MMSE optimal $\hvec{x}$ under some assumed prior $\pxhat$. 
In the MAP case, $\hvec{x}$ minimizes (over $\vec{x}$) the cost $c(\vec{x};\vec{r})=\frac{1}{2\nu}\|\vec{x}-\vec{r}\|^2-\ln\pxhat(\vec{x})$ for noisy input $\vec{r}$.
If we define $\phi(\vec{r})\defn\min_{\vec{x}} c(\vec{x};\vec{r})$, known as the Moreau-Yosida envelope of $-\ln\pxhat$, then $\hvec{x}=\vec{f}(\vec{r})=\vec{r}-\nu\nabla\phi(\vec{r})$, as discussed in \cite{Parikh:FTO:13}
(See also \cite{Ong:18} for insightful discussions in the context of image denoising.)
The elements in the Jacobian are therefore
$[J\vec{f}(\vec{r})]_{n,q}
 = \frac{\partial f_n(\vec{r})}{\partial r_q}
 = \delta_{n-q} - \nu\frac{\partial^2 \phi(\vec{r})}{\partial r_q\partial r_n}$,
and so the Jacobian matrix is symmetric.
In the MMSE case, we have that 
$\vec{f}(\vec{r})=\vec{r}-\nabla\rhoTR(\vec{r})$ 
for $\rhoTR(\cdot)$ defined in \eqref{rhoTR} (see \lemref{gradTR}), and so 
$[J\vec{f}(\vec{r})]_{n,q} = \delta_{n-q} - \frac{\partial^2 \rhoTR(\vec{r})}{\partial r_q\partial r_n}$, again implying that the Jacobian is symmetric.
But it is difficult to say anything about the Jacobian symmetry of \emph{approximate} MAP or MMSE denoisers.

Now let us consider the more general class of denoisers
\begin{align}
\vec{f}(x)
&= \vec{W}(\vec{x})\vec{x}
\label{eq:fPL} ,
\end{align}
sometimes called ``pseudo-linear'' \cite{Milanfar:SPM:13}. 
For simplicity, we assume that $\vec{W}(\cdot)$ is differentiable on $\mc{X}$.
In this case, using the chain rule, we have
\begin{align}
\frac{\partial f_n(\vec{x})}{\partial x_q}
&= w_{nq}(\vec{x}) 
   + \sum_{i=1}^N \frac{\partial w_{ni}(\vec{x})}{\partial x_q} x_i ,
\end{align}
and so the following are sufficient conditions for Jacobian symmetry.
\begin{enumerate}
\item 
$\vec{W}(\vec{x})$ is symmetric $\forall \vec{x}\in\mc{X}$,
\item
$\sum_{i=1}^N \frac{\partial w_{ni}(\vec{x})}{\partial x_q} x_i 
=\sum_{i=1}^N \frac{\partial w_{qi}(\vec{x})}{\partial x_n} x_i
~\forall \vec{x}\in\mc{X}$.
\end{enumerate}
When $\vec{W}$ is $\vec{x}$-invariant (i.e., $\vec{f}(\cdot)$ is linear) and symmetric, both of these conditions are satisfied.  
This latter case was exploited for RED in \cite{Teodoro:18}.
The case of non-linear $\vec{W}(\cdot)$ is more complicated.
Although $\vec{W}(\cdot)$ can be symmetrized (see \cite{Milanfar:JIS:13,Milanfar:ICIP:16}),
it is not clear whether the second condition above will be satisfied.

\subsection{Jacobian Symmetry Experiments} \label{sec:gradREDnum}

For denoisers that do not admit a tractable analysis, we can still evaluate the Jacobian of $\vec{f}(\cdot)$ at $\vec{x}$ numerically via
\begin{align}
\frac{f_i(\vec{x}+\epsilon\vec{e}_n)-f_i(\vec{x}-\epsilon\vec{e}_n)}{2\epsilon}  
\defn \big[ \widehat{\Jf}(\vec{x}) \big]_{i,n} ,
\end{align}
where $\vec{e}_n$ denotes the $n$th column of $\vec{I}_N$ and $\epsilon>0$ is small ($\epsilon=1\times10^{-3}$ in our experiments).  
For the purpose of quantifying JS, we define the normalized error metric
\begin{align}
e^J_{\vec{f}}(\vec{x}) 
\defn \frac{\big\|\widehat{\Jf}(\vec{x}) - [\widehat{\Jf}(\vec{x})]\tran\big\|_F^2}
{\|\widehat{\Jf}(\vec{x})\|_F^2} ,
\end{align}
which should be nearly zero for a symmetric Jacobian.

\tabref{Jacobian} shows\footnote{Matlab code for the experiments is available at \url{http://www2.ece.ohio-state.edu/~schniter/RED/index.html}.} the average value of $e^J_{\vec{f}}(\vec{x})$ 
for $17$ different image patches\footnote{\label{foot:images}%
We used the center $16\times 16$ patches of the standard
Barbara, Bike, Boats, Butterfly, Cameraman, Flower, Girl, Hat, House, Leaves, Lena, Parrots, Parthenon, Peppers, Plants, Raccoon, and Starfish test images.}
of size $16\times 16$, using denoisers that assumed a noise variance of $25^2$.
The denoisers tested were 
the TDT from \eqref{fTD} with the 2D Haar wavelet transform and soft-thresholding,
the median filter (MF) \cite{Huang:TASSP:79} with a $3\times 3$ window,
non-local means (NLM) \cite{Buades:CVPR:05}, 
BM3D \cite{Dabov:TIP:07},
TNRD \cite{Chen:TPAMI:17},
and
DnCNN \cite{Zhang:TIP:17}.
\tabref{Jacobian} shows that the Jacobians of all but the TDT denoiser are far from symmetric. 

\putTable{Jacobian}
{Average Jacobian-symmetry error on 16$\times$16 images}
{
\begin{tabular}{|c||c|c|c|c|c|c|} \hline
& TDT & MF & NLM & BM3D & TNRD & DnCNN\\ \hline\hline
$e^J_{\vec{f}}(\vec{x})$ 
& 5.36e-21 & 1.50 & 0.250 & 1.22 & 0.0378 & 0.0172\\ \hline 
\end{tabular}%
}

Jacobian symmetry is of secondary interest;
what we really care about is the accuracy of the RED gradient expressions 
\eqref{gradREDromano} and \eqref{gradRED}.
To assess gradient accuracy, we numerically evaluated the gradient of $\rhoRED(\cdot)$ at $\vec{x}$ using
\begin{align}
\frac{\rhoRED(\vec{x}+\epsilon\vec{e}_n)-\rhoRED(\vec{x}-\epsilon\vec{e}_n)}{2\epsilon}  
\defn \big[ \widehat{ \nabla \rhoRED}(\vec{x}) \big]_{n} 
\end{align}
and compared the result to the analytical expressions \eqref{gradREDromano} and \eqref{gradRED}.
\tabref{gradRED} reports the normalized gradient error 
\begin{align}
e^\nabla_{\vec{f}}(\vec{x})
\defn
\frac{\|\nabla\rhoRED(\vec{x})-\widehat{\nabla\rhoRED}(\vec{x})\|^2}{\|\widehat{\nabla\rhoRED}(\vec{x})\|^2}
\end{align}
for the same $\epsilon$, images, and denoisers used in \tabref{Jacobian}.
The results in \tabref{gradRED} show that, for all tested denoisers, the numerical gradient $\widehat{\nabla\rhoRED}(\cdot)$ closely matches the analytical expression for $\nabla\rhoRED(\cdot)$ from \eqref{gradRED}, but not that from \eqref{gradREDromano}.
The mismatch between $\widehat{\nabla\rhoRED}(\cdot)$ and $\nabla\rhoRED(\cdot)$ from \eqref{gradREDromano} is partly due to insufficient JS and partly due to insufficient LH, as we establish below.

\putTable{gradRED}
{Average gradient error on 16$\times$16 images}
{
\resizebox{\columnwidth}{!}{%
\begin{tabular}{|c||c|c|c|c|c|c|}
\hline
$e^\nabla_{\vec{f}}(\vec{x})$ 
& TDT & MF & NLM & BM3D & TNRD & DnCNN\\ \hline \hline
$\nabla\rhoRED(\vec{x})$ from \eqref{gradREDromano} 
& 0.381 & 0.904 & 0.829 & 0.790 & 0.416 & 1.76 \\ \hline 
$\nabla\rhoRED(\vec{x})$ from \eqref{gradREDint}
& 0.381 & 1.78e-21 & 0.0446 & 0.447 & 0.356 & 1.69 \\ \hline 
$\nabla\rhoRED(\vec{x})$ from \eqref{gradRED}
& 4.68e-19 & 1.75e-21 & 1.32e-20 & 4.80e-14 & 3.77e-19 & 6.76e-13 \\ \hline
\end{tabular}%
}
}

\subsection{Local Homogeneity Experiments} \label{sec:LH}

Recall that the TDT denoiser has a symmetric Jacobian, both theoretically and empirically.
Yet \tabref{gradRED} reports a disagreement between the $\nabla\rhoRED(\cdot)$ expressions \eqref{gradREDromano} and \eqref{gradRED} for TDT.
We now show that this disagreement is due to insufficient local homogeneity (LH).

To do this, we introduce yet another RED gradient expression,
\begin{align}
\nabla\rhoRED(\vec{x})
&\eqLH \vec{x} - \frac{1}{2}[J\vec{f}(\vec{x})]\vec{x} -  \frac{1}{2}[J\vec{f}(\vec{x})]\tran\vec{x} 
\label{eq:gradREDint} ,
\end{align}
which results from combining \eqref{gradRED} with \lemref{LH}.
Here, $\eqLH$ indicates that \eqref{gradREDint} holds under LH.
In contrast, 
the gradient expression \eqref{gradREDromano} holds under \emph{both} LH and Jacobian symmetry,
while the gradient expression \eqref{gradRED} holds in general (i.e., even in the absence of LH and/or Jacobian symmetry). 
We also introduce two normalized error metrics for LH, 
\begin{align}
\eLHb(\vec{x})
&\defn \frac{\big\|\vec{f}((1+\epsilon)\vec{x}) - (1+\epsilon)\vec{f}(\vec{x})\big\|^2}{\|(1+\epsilon)\vec{f}(\vec{x})\|^2} 
\label{eq:eLHb}\\
\eLHa(\vec{x})
&\defn \frac{\big\|[\widehat{J\vec{f}}(\vec{x})]\vec{x}-\vec{f}(\vec{x})\big\|^2} {\|\vec{f}(\vec{x})\|^2} 
\label{eq:eLHa}.
\end{align}
which should both be nearly zero for LH $\vec{f}(\cdot)$.
Note that $\eLHb$ quantifies LH according to definition \eqref{LH} and closely matches the numerical analysis of LH in \cite{Romano:JIS:17}.
Meanwhile, $\eLHa$ quantifies LH according to \lemref{LH} and to how LH is actually used in the gradient expressions \eqref{gradREDromano} and \eqref{gradREDint}.

\putTable{LH}
{Average local-homogeneity error on 16$\times$16 images}
{
\resizebox{\columnwidth}{!}{%
\begin{tabular}{|c||c|c|c|c|c|c|}
\hline
& TDT & MF & NLM & BM3D & TNRD & DnCNN\\ \hline \hline
$\eLHb(\vec{x})$ 
& 2.05e-8 & 0 & 1.41e-8 & 7.37e-7 & 2.18e-8 & 1.63e-8 \\ \hline
$\eLHa(\vec{x})$ 
& 0.0205 & 2.26e-23 & 0.0141 & 3.80e4 & 2.18e-2 & 0.0179 \\ \hline 
\end{tabular}%
}
}
The middle row of \tabref{gradRED} reports the average gradient error of the gradient expression \eqref{gradREDint}, and
\tabref{LH} reports average LH error for the metrics $\eLHb$ and $\eLHa$.
There we see that the average $\eLHb$ error is small for all denoisers, consistent with the experiments in \cite{Romano:JIS:17}.
But the average $\eLHa$ error is several orders of magnitude larger (for all but the MF denoiser).
We also note that the value of $\eLHa$ for BM3D is several orders of magnitude higher than for the other denoisers.
This result is consistent with \figref{cost_figs}, which shows that the cost function associated with BM3D is much less smooth than that of the other denoisers.
As discussed below, these seemingly small imperfections in LH have a significant effect on the RED gradient expressions \eqref{gradREDromano} and \eqref{gradREDint}.

Starting with the TDT denoiser, \tabref{gradRED} shows that the gradient error on \eqref{gradREDint} is large, which can only be caused by insufficient LH.
The insufficient LH is confirmed in \tabref{LH}, which shows that the value of $\eLHa(\vec{x})$ for TDT is non-negligible, especially in comparison to the value for MF.

Continuing with the MF denoiser, \tabref{Jacobian} indicates that its Jacobian is far from symmetric,
while \tabref{LH} indicates that it is LH.
The gradient results in \tabref{gradRED} are consistent with these behaviors: the $\nabla\rhoRED(\vec{x})$ expression \eqref{gradREDint} is accurate on account of LH being satisfied, but the $\nabla\rhoRED(\vec{x})$ expression \eqref{gradREDromano} is inaccurate on account of a lack of JS.

The results for the remaining denoisers NLM, BM3D, TNRD, and BM3D show a common trend: they have non-trivial levels of \emph{both} JS error (see \tabref{Jacobian}) and LH error (see \tabref{LH}).
As a result, the gradient expressions \eqref{gradREDromano} and \eqref{gradREDint} are \emph{both} inaccurate (see \tabref{gradRED}).

In conclusion, the experiments in this section show that the RED gradient expressions \eqref{gradREDromano} and \eqref{gradREDint} are very sensitive to small imperfections in LH.
Although the experiments in \cite{Romano:JIS:17} suggested that many popular image denoisers are approximately LH, our experiments suggest that their levels of LH are insufficient to maintain the accuracy of the RED gradient expressions \eqref{gradREDromano} and \eqref{gradREDint}.

\subsection{Hessian and Convexity}

From \eqref{partialRED}, the $(n,j)$th element of the Hessian of $\rhoRED(\vec{x})$ equals
\begin{align}
\lefteqn{ 
\frac{\partial^2 \rhoRED(\vec{x})}{\partial x_n \partial x_j}
= \frac{\partial}{\partial x_j}\left(
x_n - \frac{1}{2} f_n(\vec{x}) - \frac{1}{2} \sum_{i=1}^N x_i 
\frac{\partial f_i(\vec{x})}{\partial x_n}  
\right) }\\
&= \delta_{n-j} - \frac{1}{2} \frac{\partial f_n(\vec{x})}{\partial x_j} 
- \frac{1}{2}  \frac{\partial f_j(\vec{x})}{\partial x_n}  
- \frac{1}{2}  x_j \frac{\partial^2 f_j(\vec{x})}{\partial x_n\partial x_j} 
\qquad\quad \nonumber\\&\quad
- \frac{1}{2} \sum_{i\neq j} x_i \frac{\partial^2 f_i(\vec{x})}{\partial x_n \partial x_j} \\
&= \delta_{n-j} - \frac{1}{2} \frac{\partial f_n(\vec{x})}{\partial x_j} 
- \frac{1}{2}  \frac{\partial f_j(\vec{x})}{\partial x_n}  
- \frac{1}{2} \sum_{i=1}^N x_i \frac{\partial^2 f_i(\vec{x})}{\partial x_n \partial x_j} .
\end{align}
where $\delta_{k}=1$ if $k=0$ and otherwise $\delta_{k}=0$.
Thus, the Hessian of $\rhoRED(\cdot)$ at $\vec{x}$ equals
\begin{align}
H\rhoRED(\vec{x})
&= \vec{I} - \frac{1}{2} \Jf(\vec{x}) - \frac{1}{2} [\Jf(\vec{x})]\tran
- \frac{1}{2} \sum_{i=1}^N x_i Hf_i(\vec{x}) 
\label{eq:hessRED} .
\end{align}
This expression can be contrasted with the Hessian expression from \cite[(60)]{Romano:JIS:17}, which reads
\begin{align}
\vec{I} - \Jf(\vec{x}) 
\label{eq:hessREDromano} .
\end{align}

Interestingly, \eqref{hessRED} differs from \eqref{hessREDromano} even when the denoiser has a symmetric Jacobian $\Jf(\vec{x})$.
One implication is that, even if eigenvalues of $\Jf(\vec{x})$ are limited to the interval $[0,1]$, the Hessian $H\rhoRED(\vec{x})$ may not be positive semi-definite due to the last term in \eqref{hessRED}, with possibly negative implications on the convexity of $\rhoRED(\cdot)$.
That said, the RED algorithms do not actually minimize the variational objective $\ell(\vec{x};\vec{y})+\lambda\rhoRED(\vec{x})$ for common denoisers $\vec{f}(\cdot)$ (as established in \secref{trajectory}), and so the convexity of $\rhoRED(\cdot)$ may not be important in practice.
We investigate the convexity of $\rhoRED(\cdot)$ numerically in \secref{cost}.

\subsection{Example RED-SD Trajectory} \label{sec:trajectory}

We now provide an example of how the RED algorithms from \cite{Romano:JIS:17} do not necessarily minimize the variational objective $\ell(\vec{x};\vec{y})+\lambda\rhoRED(\vec{x})$.

For a trajectory $\{\vec{x}_k\}_{k=1}^K$ produced by the steepest-descent (SD) RED algorithm from \cite{Romano:JIS:17},
\figref{poor_behaved} plots, versus iteration $k$, the RED Cost $\CRED(\vec{x}_k)$ from \eqref{CRED} and the error on the fixed-point condition \eqref{fpRED}, i.e., $\|\vec{g}(\vec{x}_k)\|^2$ with 
\begin{align}
\vec{g}(\vec{x}) 
&\defn \frac{1}{\sigma^2}\vec{A}\tran(\vec{Ax}-\vec{y})
+\lambda\big(\vec{x}-\vec{f}(\vec{x})\big)
\label{eq:gradCREDalg} .
\end{align}
For this experiment, we used 
the $3\times 3$ median-filter for $\vec{f}(\cdot)$,
the Starfish image,
and
noisy measurements $\vec{y}=\vec{x}+\mc{N}(\vec{0},\sigma^2\vec{I})$ 
with $\sigma^2=20$ (i.e., $\vec{A}=\vec{I}$ in \eqref{CRED}).

\putFrag{poor_behaved}
	{RED cost $\CRED(\vec{x}_k)$ and
         fixed-point error $\|\vec{A}\tran(\vec{Ax}_k-\vec{y})/\sigma^2
         +\lambda(\vec{x}_k-\vec{f}(\vec{x}_k))\|^2$ versus iteration $k$
         for $\{\vec{x}_k\}_{k=1}^K$ 
         produced by the RED-SD algorithm from \cite{Romano:JIS:17}.
         Although the fixed-point condition is asymptotically satisfied, 
         the RED cost does not decrease with $k$.}
	{\figsize}
  	{\psfrag{cost}[b][b][0.8]{\sf $\CRED(\vec{x}_k)$}
	 \psfrag{fp error}[t][t][0.8]{\sf 
                  $\big\|\vec{A}\tran(\vec{Ax}_k-\vec{y})/\sigma^2
                  +\lambda\big(\vec{x}_k-\vec{f}(\vec{x}_k)\big)\big\|^2$}
	 \psfrag{iter}[t][t][0.7]{\sf iteration $k$} }
        {trim=10pt 10pt 0pt 20pt}
 
\Figref{poor_behaved} shows that, although the RED-SD algorithm asymptotically  satisfies the fixed-point condition \eqref{fpRED}, the RED cost function $\CRED(\vec{x}_k)$ does not decrease with $k$, as would be expected if the RED algorithms truly minimized the RED cost $\CRED(\cdot)$.
This behavior implies that any optimization algorithm that monitors the objective value $\CRED(\vec{x}_k)$ for, say, backtracking line-search (e.g., the FASTA algorithm \cite{Goldstein:14}), is difficult to apply in the context of RED.

\subsection{Visualization of RED Cost and RED-Algorithm Gradient} \label{sec:cost}
We now show visualizations of the RED cost $\CRED(\vec{x})$ from \eqref{CRED} and the RED algorithm's gradient field $\vec{g}(\vec{x})$ from \eqref{gradCREDalg}, for various image denoisers.
For this experiment, we used 
the Starfish image,
noisy measurements $\vec{y}=\vec{x}+\mc{N}(\vec{0},\sigma^2\vec{I})$ 
with $\sigma^2=100$ (i.e., $\vec{A}=\vec{I}$ in \eqref{CRED} and \eqref{gradCREDalg}),
and $\lambda$ optimized over a grid (of $20$ values logarithmically spaced between $0.0001$ and $1$) for each denoiser, so that the PSNR of the RED fixed-point $\hvec{x}$ is maximized.

\Figref{cost_figs} plots the RED cost $\CRED(\vec{x})$ and the RED algorithm's gradient field $\vec{g}(\vec{x})$ for the TDT, MF, NLM, BM3D, TNRD, and DnCNN denoisers.
To visualize these quantities in two dimensions, we plotted values of $\vec{x}$ centered at the RED fixed-point $\hvec{x}$ and varying along two randomly chosen directions.
The figure shows that the minimizer of $\CRED(\vec{x})$ does not coincide with the fixed-point $\hvec{x}$, and that the RED cost $\CRED(\cdot)$ is not always smooth or convex.

\begin{figure}
\newcommand{\hi}{1.35in}
\newcommand{\szz}{0.75}
\begin{tabular}{@{}c@{}c@{}}
\psfrag{TDT}[b][b][\szz]{\sf TDT}
\psfrag{alpha}[Bl][Bl][\szz]{$\alpha$}\psfrag{beta}[t][t][\szz]{$\beta$}
\includegraphics[height=\hi,clip]{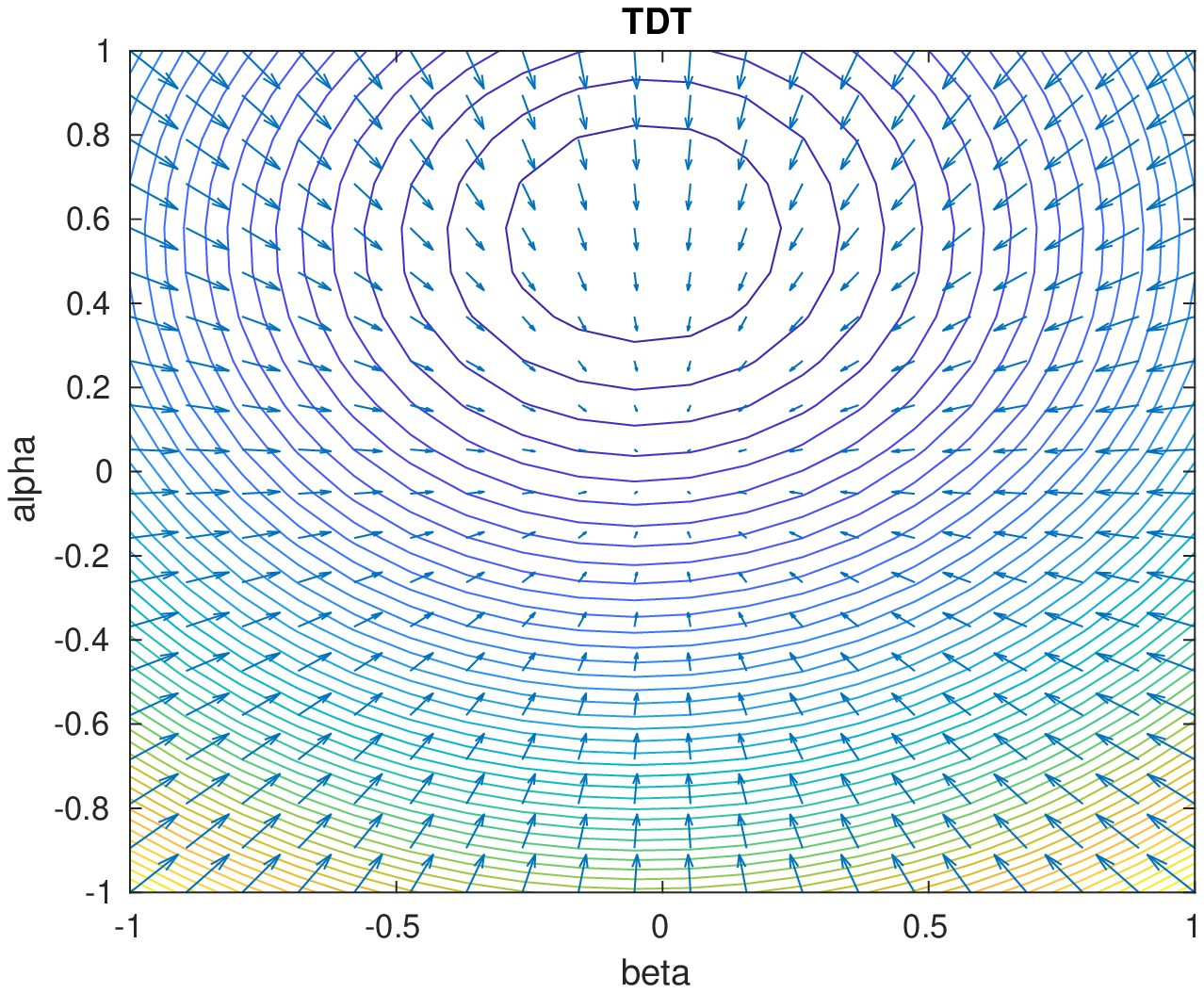}&
\psfrag{MF}[b][b][\szz]{\sf MF}
\psfrag{alpha}[Bl][Bl][\szz]{$\alpha$}\psfrag{beta}[t][t][\szz]{$\beta$}
\includegraphics[height=\hi,clip]{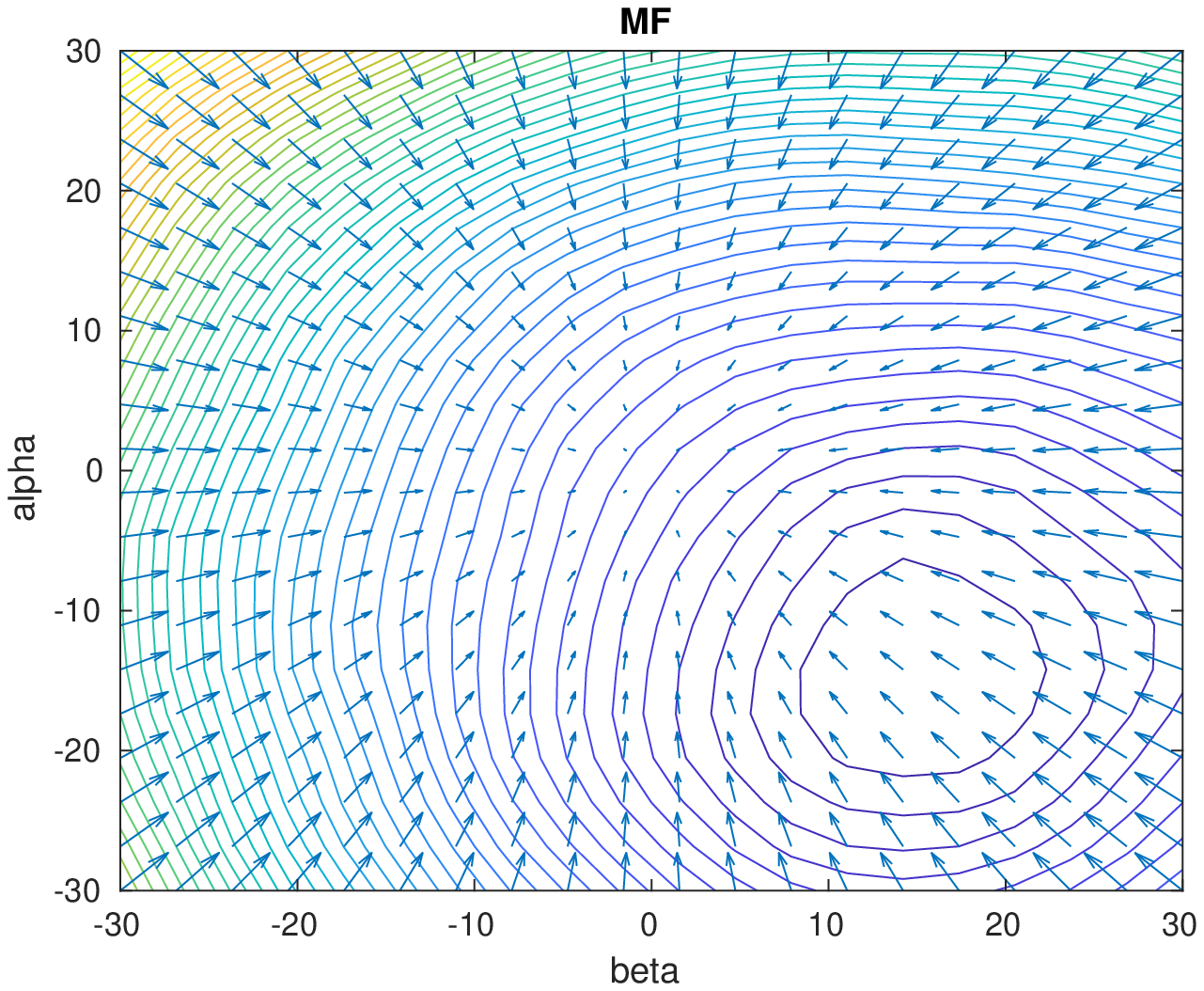}\\[2mm]
\psfrag{NLM}[b][b][\szz]{\sf NLM}
\psfrag{alpha}[Bl][Bl][\szz]{$\alpha$}\psfrag{beta}[t][t][\szz]{$\beta$}
\includegraphics[height=\hi,clip]{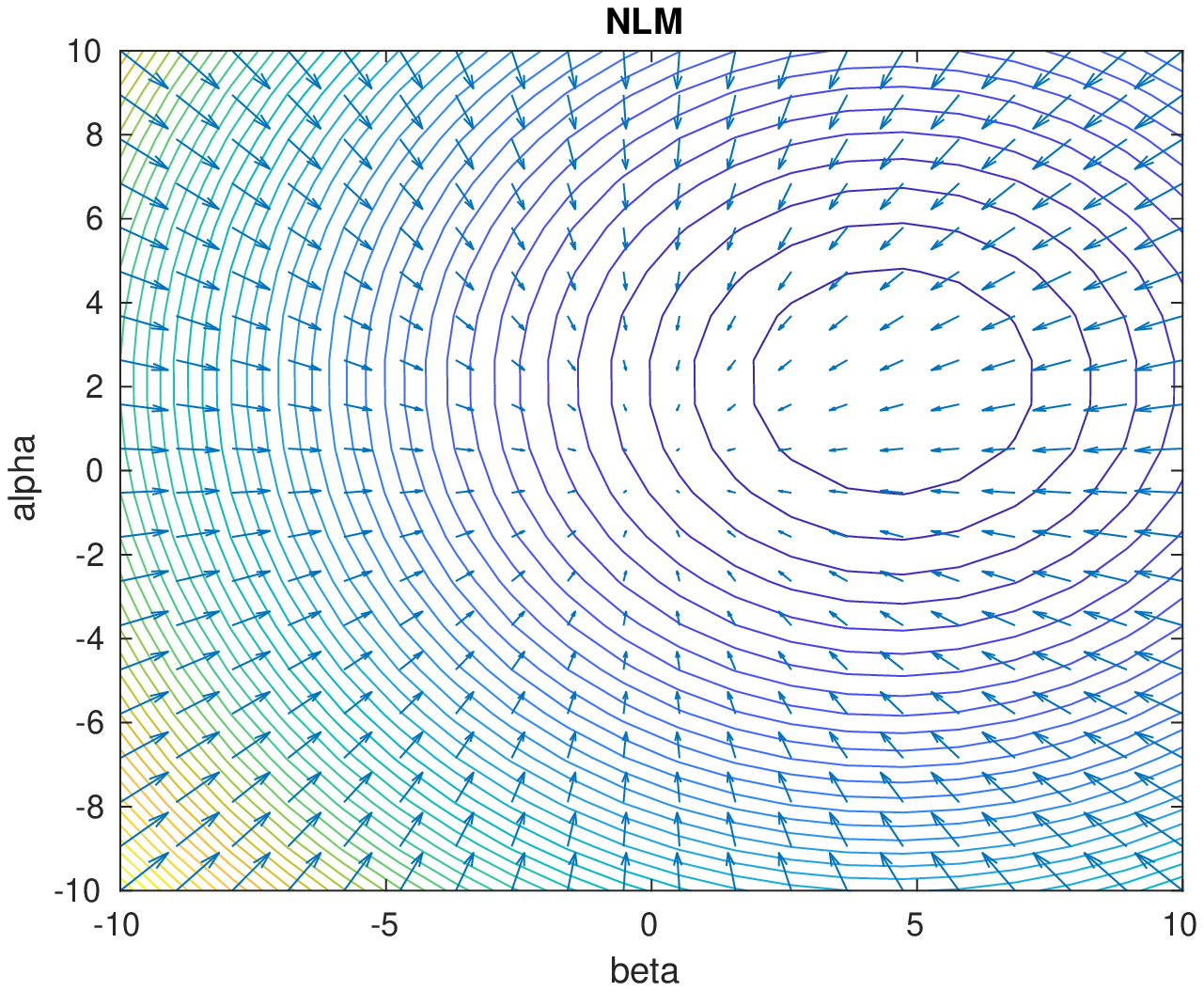}&
\psfrag{BM3D}[b][b][\szz]{\sf BM3D}
\psfrag{alpha}[Bl][Bl][\szz]{$\alpha$}\psfrag{beta}[t][t][\szz]{$\beta$}
\includegraphics[height=\hi,trim=-15 -15 -15 -15,clip]{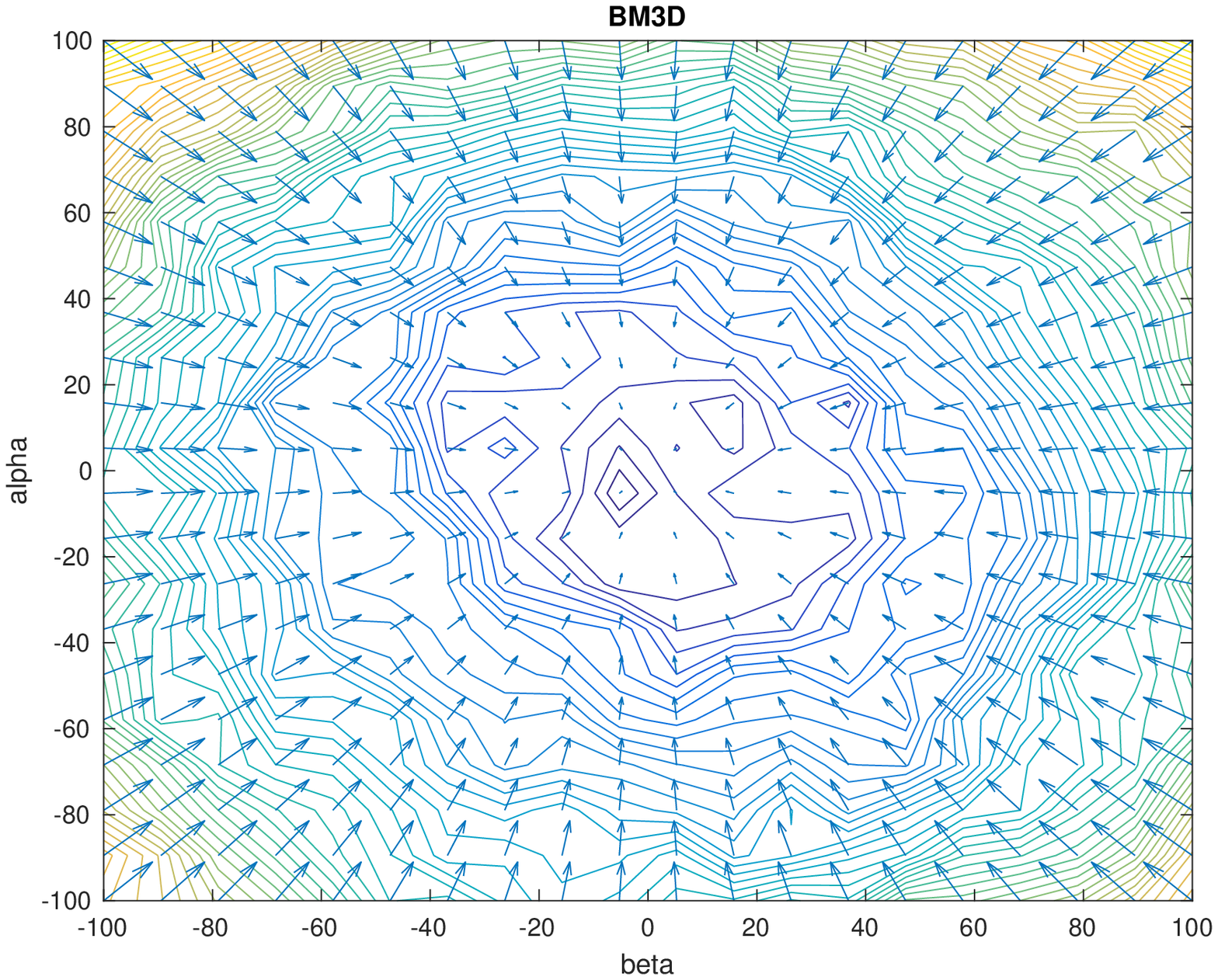}\\[2mm]
\psfrag{TRND}[b][b][\szz]{\sf TNRD}
\psfrag{alpha}[Bl][Bl][\szz]{$\alpha$}\psfrag{beta}[t][t][\szz]{$\beta$}
\includegraphics[height=\hi,clip]{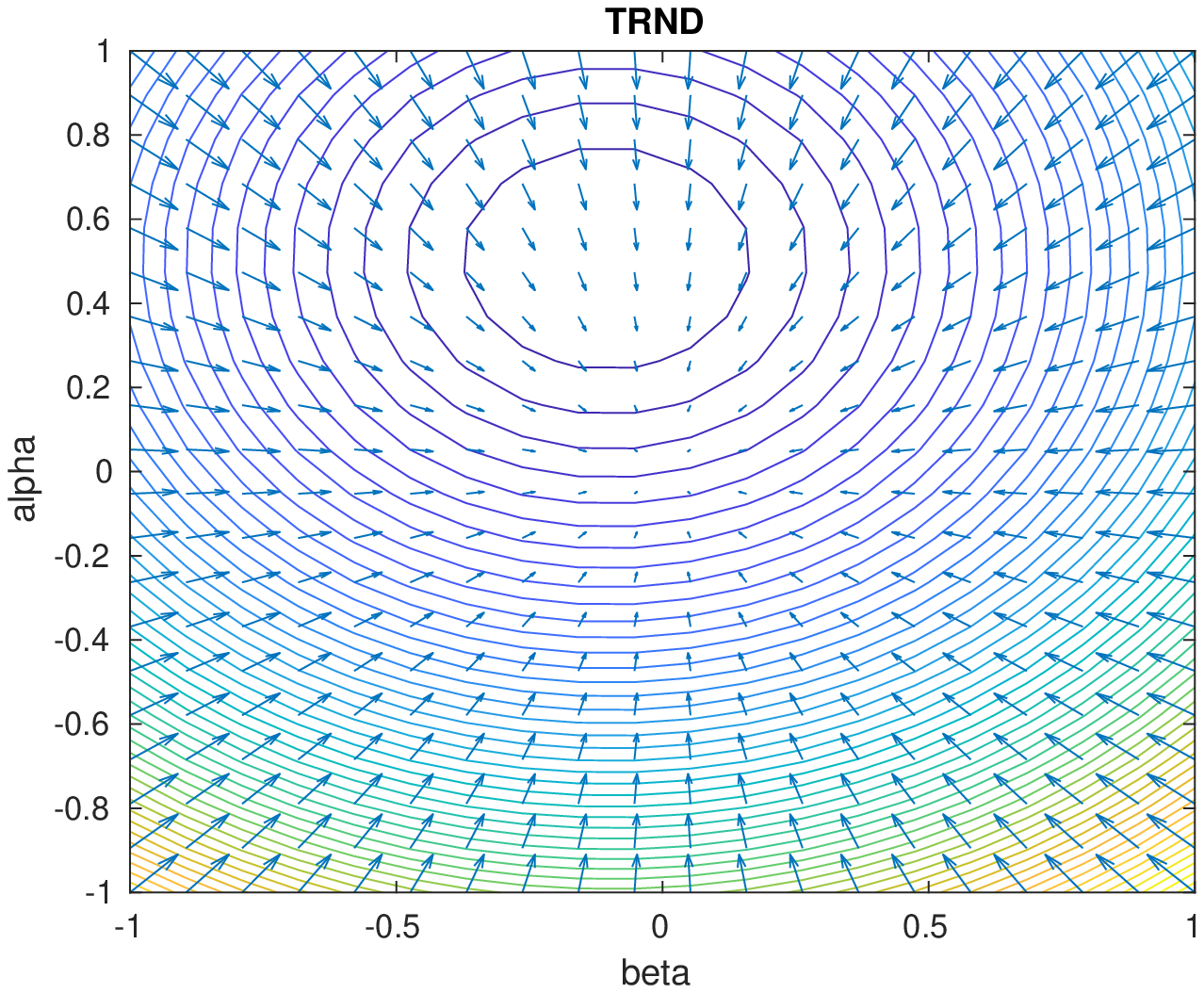}&
\psfrag{DnCNN}[b][b][\szz]{\sf DnCNN}
\psfrag{alpha}[Bl][Bl][\szz]{$\alpha$}\psfrag{beta}[t][t][\szz]{$\beta$}
\includegraphics[height=\hi,clip]{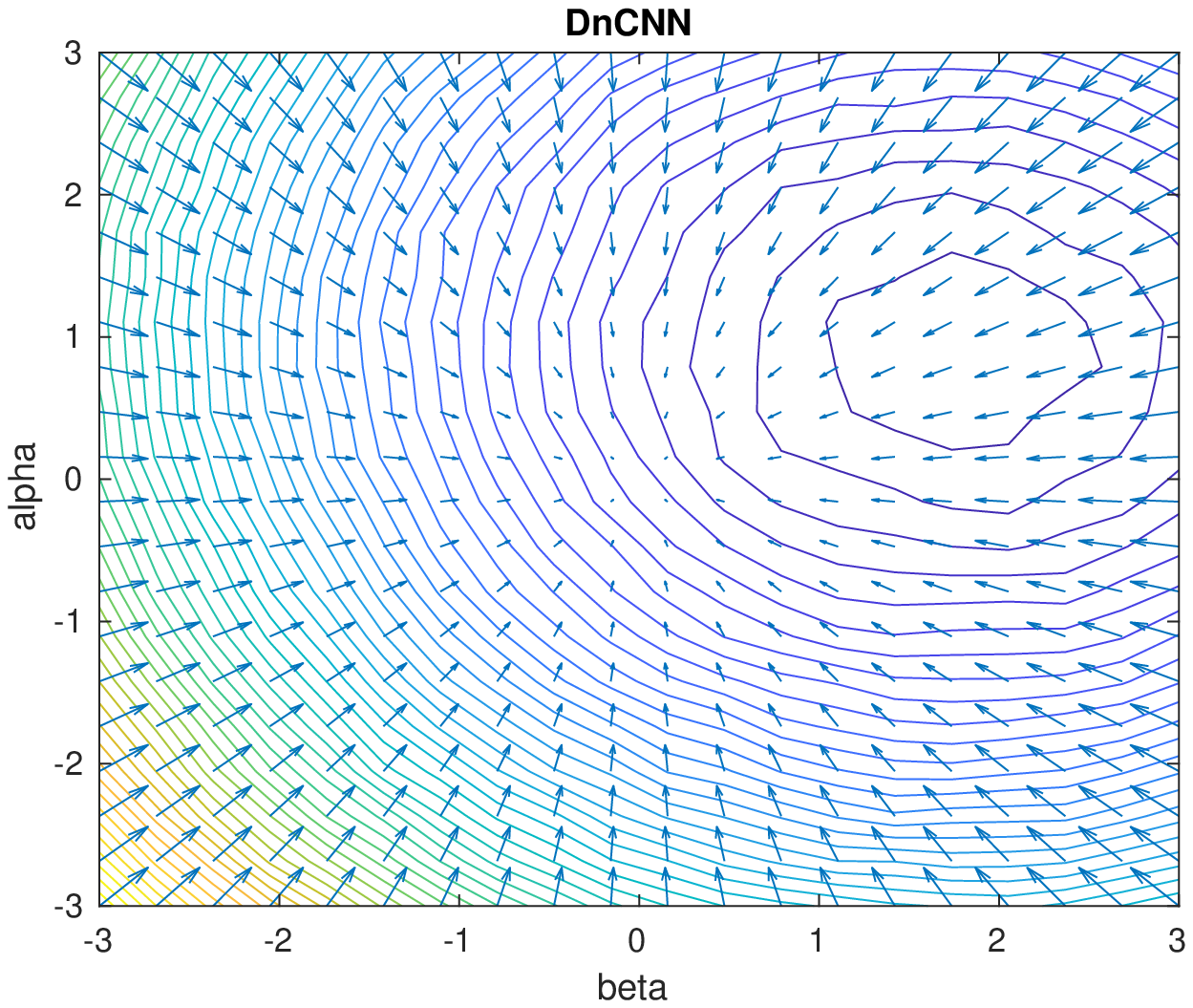}\\
\end{tabular}
\caption{Contours show RED cost $\CRED(\vec{x}_{\alpha,\beta})$ from \eqref{CRED} and arrows show RED-algorithm gradient field $\vec{g}(\vec{x}_{\alpha,\beta})$ from \eqref{gradCREDalg} versus $(\alpha,\beta)$,
where $\vec{x}_{\alpha,\beta}=\hvec{x}+\alpha\vec{e}_1+\beta\vec{e}_2$ 
with randomly chosen $\vec{e}_1$ and $\vec{e}_2$.  
The subplots show that the minimizer of $\CRED(\vec{x}_{\alpha,\beta})$ is not the fixed-point $\hvec{x}$, and that $\CRED(\cdot)$ may be non-smooth and/or non-convex.}
\label{fig:cost_figs}
\end{figure}

%%%%%%%%%%%%%%%%%%%%%%%%%%%%%%%%%%%%%%%%%%%%%%%%%%%%%%%%%%%%%%%%%%%%%%%%%%%%%%%
\section{Score-Matching by Denoising} \label{sec:new}

As discussed in \secref{RED}, the RED algorithms proposed in \cite{Romano:JIS:17} are explicitly based on gradient rule 
\begin{align}
\nabla\rho(\vec{x}) = \vec{x}-\vec{f}(\vec{x})
\label{eq:desired} .
\end{align}
This rule appears to be useful, since these algorithms work very well in practice.
But \secref{clarifications} established that $\rhoRED(\cdot)$ from \eqref{rhoRED} does not usually satisfy \eqref{desired}.
We are thus motived to seek an alternative explanation for the RED algorithms.
In this section, we explain them through a framework that we call \emph{score-matching by denoising} (SMD).

\subsection{Tweedie Regularization} \label{sec:rhoTR} 
As a precursor to the SMD framework, we first propose a technique based on what we will call \emph{Tweedie regularization}.

Recall the measurement model \eqref{rxe} used to define the ``denoising'' problem, repeated in \eqref{rxe1} for convenience:
\begin{align}
\vec{r} = \vec{x}\true+\vec{e}, \quad \vec{e}\sim \mc{N}(\vec{0},\nu\vec{I})
\label{eq:rxe1} .
\end{align} 
To avoid confusion, we will refer to $\vec{r}$ as ``pseudo-measurements'' and $\vec{y}$ as ``measurements.'' 
From \eqref{rxe1}, the likelihood of $\vec{x}\true$ is $p(\vec{r}|\vec{x}\true;\nu) = \mc{N}(\vec{r};\vec{x}\true,\nu\vec{I})$.

Now, suppose that we model the true image $\vec{x}\true$ as a realization of a random vector $\vec{x}$ with prior pdf $\pxhat$.
We write ``$\pxhat$'' to emphasize that the model distribution may differ from the true distribution $\px$ (i.e., the distribution from which the image $\vec{x}$ is actually drawn).
Under this prior model, the MMSE denoiser of $\vec{x}$ from $\vec{r}$ is
\begin{align}
\E_{\pxhat}\{\vec{x}|\vec{r}\} 
&\defn \fhatmmse(\vec{r})
\label{eq:fhatmmse} ,
\end{align}
and the likelihood of observing $\vec{r}$ is
\begin{align}
\prhat(\vec{r};\nu)
&\defn \int_{\Real^N} p(\vec{r}|\vec{x};\nu)\pxhat(\vec{x}) \deriv\vec{x}
\label{eq:prhat_def}\\
&= \int_{\Real^N} \mc{N}(\vec{r};\vec{x},\nu\vec{I}) \pxhat(\vec{x}) \deriv\vec{x}
\label{eq:pr} .
\end{align}
We will now define the \emph{Tweedie regularizer} (TR) as
\begin{align}
\rhoTR(\vec{r};\nu)
&\defn - \nu \ln \prhat(\vec{r};\nu)
\label{eq:rhoTR} .
\end{align}
As we now show, $\rhoTR(\cdot)$ has the desired property \eqref{desired}.

\begin{lemma}[Tweedie] \label{lem:gradTR}
For $\rhoTR(\vec{r};\nu)$ defined in \eqref{rhoTR}, 
\begin{align}
\nabla \rhoTR(\vec{r};\nu) = \vec{r}-\fhatmmse(\vec{r})
\label{eq:gradTR} ,
\end{align}
where $\fhatmmse(\cdot)$ is the MMSE denoiser from \eqref{fhatmmse}.
\end{lemma}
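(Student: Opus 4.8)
The plan is to recognize \lemref{gradTR} as a form of Tweedie's formula and to prove it by differentiating the Gaussian smoothing integral \eqref{pr} with respect to $\vec{r}$ under the integral sign. The elementary fact that drives everything is that the Gaussian kernel satisfies
\begin{align}
\nabla_{\vec{r}}\,\mc{N}(\vec{r};\vec{x},\nu\vec{I})
&= -\tfrac{1}{\nu}(\vec{r}-\vec{x})\,\mc{N}(\vec{r};\vec{x},\nu\vec{I}),
\end{align}
so that differentiating in $\vec{r}$ merely pulls down the affine factor $\tfrac{1}{\nu}(\vec{x}-\vec{r})$.

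First I would apply this identity inside \eqref{pr} to obtain
\begin{align}
\nabla\prhat(\vec{r};\nu)
&= \int_{\Real^N} \nabla_{\vec{r}}\,\mc{N}(\vec{r};\vec{x},\nu\vec{I})\,\pxhat(\vec{x})\deriv\vec{x} \nonumber\\
&= \frac{1}{\nu}\int_{\Real^N}(\vec{x}-\vec{r})\,\mc{N}(\vec{r};\vec{x},\nu\vec{I})\,\pxhat(\vec{x})\deriv\vec{x}.
\end{align}
Dividing by $\prhat(\vec{r};\nu)$ and using Bayes' rule to identify the posterior $p(\vec{x}\,|\,\vec{r};\nu)=\mc{N}(\vec{r};\vec{x},\nu\vec{I})\pxhat(\vec{x})/\prhat(\vec{r};\nu)$, the right-hand side becomes $\tfrac{1}{\nu}\E_{\pxhat}\{\vec{x}-\vec{r}\,|\,\vec{r}\}=\tfrac{1}{\nu}\big(\fhatmmse(\vec{r})-\vec{r}\big)$ by the definition \eqref{fhatmmse} of $\fhatmmse$. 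Hence $\nabla\ln\prhat(\vec{r};\nu)=\nabla\prhat(\vec{r};\nu)/\prhat(\vec{r};\nu)=\tfrac{1}{\nu}\big(\fhatmmse(\vec{r})-\vec{r}\big)$, and multiplying by $-\nu$ as dictated by the definition \eqref{rhoTR} of $\rhoTR$ yields the claim $\nabla\rhoTR(\vec{r};\nu)=\vec{r}-\fhatmmse(\vec{r})$ in \eqref{gradTR}.

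The only step that genuinely needs justification is the interchange of the gradient and the integral in the second display, and I expect this to be the sole (mild) obstacle. For $\vec{r}$ restricted to any fixed compact set, the integrand $\nabla_{\vec{r}}\,\mc{N}(\vec{r};\vec{x},\nu\vec{I})\,\pxhat(\vec{x})$ is jointly continuous in $(\vec{r},\vec{x})$ and is dominated by an integrable function of $\vec{x}$ (the Gaussian factor decays faster than the polynomial factor $\|\vec{x}-\vec{r}\|$ grows, and $\pxhat$ is a probability density), so the dominated-convergence form of the Leibniz rule applies; the same bound shows that $\prhat(\cdot;\nu)$ is positive and smooth, so $\ln\prhat(\cdot;\nu)$ is differentiable and the chain rule used above is valid. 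I would simply record this as a standing regularity assumption on $\pxhat$ rather than belabor it.
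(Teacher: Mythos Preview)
Your proposal is correct and follows essentially the same approach as the paper's proof: differentiate the Gaussian kernel inside the integral \eqref{pr}, divide by $\prhat(\vec{r};\nu)$ to expose the posterior via Bayes' rule, and recognize the resulting integral as the conditional mean $\fhatmmse(\vec{r})$. The only cosmetic differences are that the paper works coordinate-by-coordinate on $\rhoTR$ directly while you work in vector form on $\ln\prhat$ and then multiply by $-\nu$, and that you additionally justify the interchange of differentiation and integration, which the paper omits.
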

\begin{proof}
Equation \eqref{gradTR} is a direct consequence of a classical result known as Tweedie's formula \cite{Robbins:BSMSP:56,Efron:JASA:11}.
A short proof, from first principles, is now given for completeness.
\begin{align}
\lefteqn{
\frac{\partial}{\partial r_n} \rhoTR(\vec{r};\nu) 
= - \nu \frac{\partial}{\partial r_n} \ln \int_{\Real^N} \pxhat(\vec{x}) \mc{N}(\vec{r};\vec{x},\nu\vec{I}) \deriv\vec{x} }\\
&= - \frac{\nu \int_{\Real^N} \pxhat(\vec{x}) \frac{\partial}{\partial r_n} \mc{N}(\vec{r};\vec{x},\nu\vec{I}) \deriv\vec{x}}{\int_{\Real^N} \pxhat(\vec{x}) \mc{N}(\vec{r};\vec{x},\nu\vec{I}) \deriv\vec{x}} \\
&= \frac{\int_{\Real^N} \pxhat(\vec{x}) \mc{N}(\vec{r};\vec{x},\nu\vec{I}) (r_n-x_n) \deriv\vec{x}}{\int_{\Real^N} \pxhat(\vec{x}) \mc{N}(\vec{r};\vec{x},\nu\vec{I}) \deriv\vec{x}} \label{eq:deriv_gauss}\\
&= r_n - \int_{\Real^N} x_n \frac{\pxhat(\vec{x}) \mc{N}(\vec{r};\vec{x},\nu\vec{I}) }{\int_{\Real^N} \pxhat(\vec{x}') \mc{N}(\vec{r};\vec{x}',\nu\vec{I}) \deriv\vec{x}'} \deriv\vec{x} \qquad\\
&= r_n - \int_{\Real^N} x_n \,\pxgrhat(\vec{x}|\vec{r};\nu) \deriv\vec{x}\\
&= r_n - [\fhatmmse(\vec{r})]_n 
\label{eq:deriv_rhoTR},
\end{align}
where \eqref{deriv_gauss} used
$\frac{\partial}{\partial r_n} \mc{N}(\vec{r};\vec{x},\nu\vec{I})
 = \mc{N}(\vec{r};\vec{x},\nu\vec{I}) (x_n-r_n)/\nu$.
Stacking \eqref{deriv_rhoTR} for $n=1,\dots,N$ in a vector yields \eqref{gradTR}.
\end{proof}

Thus, if the TR regularizer $\rhoTR(\cdot;\nu)$ is used in the optimization problem \eqref{CRED}, then the solution $\hvec{x}$ must satisfy the fixed-point condition \eqref{fpRED} associated with the RED algorithms from \cite{Romano:JIS:17}, albeit with an MMSE-type denoiser.
This restriction will be removed using the SMD framework in \secref{SMD}.

It is interesting to note that the gradient property \eqref{gradTR} holds even for non-homogeneous $\fhatmmse(\cdot)$.
This generality is important in applications under which $\fhatmmse(\cdot)$ is known to lack LH.
For example, with a binary image $\vec{x}\in\{0,1\}^N$ modeled by $\pxhat(\vec{x})=\prod_{n=1}^N 0.5 (\delta(x_n)+\delta(x_n-1))$,
the MMSE denoiser takes the form
$[\fhatmmse(\vec{x})]_n = 0.5 + 0.5\tanh(x_n/\nu)$,
which is not LH.

\subsection{Tweedie Regularization as Kernel Density Estimation} \label{sec:kde}

We now show that TR arises naturally in the data-driven, non-parametric context through kernel-density estimation (KDE) \cite{Parzen:AMS:62}.

Recall that, in most imaging applications, the true prior $\px$ is unknown, as is the true MMSE denoiser $\fmmse(\cdot)$.
There are several ways to proceed.
One way is to design ``by hand'' an approximate prior $\pxhat$ that leads to a computationally efficient denoiser $\fhatmmse(\cdot)$.
But, because this denoiser is not MMSE for $\vec{x}\sim\px$, the performance of the resulting estimates $\hvec{x}$ will suffer relative to $\fmmse$.

Another way to proceed is to approximate the prior using a large corpus of training data $\{\vec{x}_t\}_{t=1}^T$.
To this end, an approximate prior could be formed using the empirical estimate
\begin{align}
\pxhat(\vec{x})
&= \frac{1}{T}\sum_{t=1}^T \delta(\vec{x}-\vec{x}_t)
\label{eq:px_emp} ,
\end{align}
but a more accurate match to the true prior $\px$ can be obtained using
\begin{align}
\pxsmooth(\vec{x};\nu) 
&= \frac{1}{T}\sum_{t=1}^T \mc{N}(\vec{x};\vec{x}_t,\nu\vec{I})
\label{eq:px_parzen} 
\end{align}
with appropriately chosen $\nu>0$, a technique known as kernel density estimation (KDE) or Parzen windowing \cite{Parzen:AMS:62}.
Note that if $\pxsmooth$ is used as a surrogate for $\px$, then the MAP optimization problem becomes
\begin{align}
\hvec{x}
&= \arg\min_{\vec{r}} \frac{1}{2\sigma^2}\|\vec{Ar}-\vec{y}\|^2 - \ln \pxsmooth(\vec{r};\nu) 
\label{eq:map_parzen} \\
&= \arg\min_{\vec{r}} \frac{1}{2\sigma^2}\|\vec{Ar}-\vec{y}\|^2 + \lambda
\rhoTR(\vec{r};\nu) \text{~for~}\lambda=\frac{1}{\nu} 
\label{eq:map_TR} ,
\end{align}
with $\rhoTR(\cdot;\nu)$ from \eqref{prhat_def}-\eqref{rhoTR} constructed using $\pxhat$ from \eqref{px_emp}.
In summary, TR arises naturally in the data-driven approach to image recovery when KDE is used to smooth the empirical prior.

\subsection{Score-Matching by Denoising} \label{sec:SMD} 

A limitation of the above TR framework is that it results in denoisers $\fhatmmse$ with symmetric Jacobians.
(Recall the discussion of MMSE denoisers in \secref{JS}.)
To justify the use of RED algorithms with non-symmetric Jacobians, we introduce the \emph{score-matching by denoising} (SMD) framework in this section.

Let us continue with the KDE-based MAP estimation problem \eqref{map_parzen}.
Note that $\hvec{x}$ from \eqref{map_parzen} zeros the gradient of the MAP optimization objective and thus obeys the fixed-point equation
\begin{align}
\frac{1}{\sigma^2}\vec{A}\tran(\vec{A}\hvec{x}-\vec{y}) - \nabla \ln \pxsmooth(\hvec{x};\nu) &= \vec{0}
\label{eq:fp_parzen} .
\end{align}
In principle, $\hvec{x}$ in \eqref{fp_parzen} could be found using gradient descent or similar techniques. 
However, computation of the gradient
\begin{align}
\nabla \ln \pxsmooth(\vec{r};\nu)
&= \frac{\nabla \pxsmooth(\vec{r};\nu)}{\pxsmooth(\vec{r};\nu)}
= \frac{\sum_{t=1}^T (\vec{x}_t-\vec{r}) 
        \mc{N}(\vec{r};\vec{x}_t,\nu\vec{I})}
   {\nu\sum_{t=1}^T \mc{N}(\vec{r};\vec{x}_t,\nu\vec{I})} 
\label{eq:score_parzen} 
\end{align}
is too expensive for the values of $T$ typically needed to generate a good image prior $\pxsmooth$.

A tractable alternative is suggested by the fact that 
\begin{align}
\nabla \ln \pxsmooth(\vec{r};\nu)
&= \frac{\fhatmmse(\vec{r}) -\vec{r}}{\nu} 
\label{eq:score_parzen2} \\
\text{for~} \fhatmmse(\vec{r}) 
&= \frac{\sum_{t=1}^T \vec{x}_t
        \mc{N}(\vec{r};\vec{x}_t,\nu\vec{I})}
        {\sum_{t=1}^T \mc{N}(\vec{r};\vec{x}_t,\nu\vec{I})} ,
\end{align}
where $\fhatmmse(\vec{r})$ is the MMSE estimator of $\vec{x}\sim\pxhat$ from $\vec{r}=\vec{x}+\mc{N}(\vec{0},\nu\vec{I})$.
In particular, if we can construct a good approximation to $\fhatmmse(\cdot)$ using a denoiser $\ftheta(\cdot)$ in a computationally efficient function class $\mc{F}\defn\{\vec{f}_{\vec{\theta}}: \vec{\theta}\in\vec{\Theta}\}$, then we can efficiently approximate the MAP problem \eqref{map_parzen}.

This approach can be formalized using the framework of \emph{score matching} \cite{Hyvarinen:JMLR:05}, which aims to approximate the ``score'' (i.e., the gradient of the log-prior) rather than the prior itself.
For example, suppose that we want to want to approximate the score $\nabla \ln \pxsmooth(\cdot;\nu)$.
For this, Hyv{\"a}rinen \cite{Hyvarinen:JMLR:05} suggested to first find the best mean-square fit among a set of computationally efficient functions $\vec{\psi}(\cdot;\vec{\theta})$, i.e., find
\begin{align}
\hvec{\theta}
&= \arg\min_{\vec{\theta}} \E_{\pxsmooth} \left\{\left\|
\vec{\psi}(\vec{x};\vec{\theta}) - \nabla \ln \pxsmooth(\vec{x};\nu) 
\right\|^2 \right\}
\label{eq:score_matching} ,
\end{align}
and then to approximate the score $\nabla \ln \pxsmooth(\cdot;\nu)$ by $\vec{\psi}(\cdot;\hvec{\theta})$. 
Later, in the context of denoising autoencoders, Vincent \cite{Vincent:NC:11} showed that if one chooses
\begin{align}
\vec{\psi}(\vec{x};\vec{\theta})
&= \frac{\ftheta(\vec{x})-\vec{x}}{\nu}
\label{eq:psi}
\end{align}
for some function $\ftheta(\cdot)\in\mc{F}$, then $\hvec{\theta}$ from \eqref{score_matching} can be equivalently written as
\begin{align}
\hvec{\theta}
&= \arg\min_{\vec{\theta}} 
\E_{\pxhat} \left\{ \left\| \ftheta\big(\vec{x}+\mc{N}(0,\nu\vec{I})\big) - \vec{x} \right\|^2\right\} .
\end{align}
In this case, $\fthetahat(\cdot)$ is the MSE-optimal denoiser, averaged over $\pxhat$ and constrained to the function class $\mc{F}$.

Note that the denoiser approximation error can be directly connected to the score-matching error as follows.
For any denoiser $\ftheta(\cdot)$ and any input $\vec{x}$,
\begin{align}
\lefteqn{ 
\|\ftheta(\vec{x})-\fhatmmse(\vec{x})\|^2 
}\nonumber\\
&=\nu^2\left\|\frac{\ftheta(\vec{x})-\vec{x}}{\nu} - \nabla\ln\pxsmooth(\vec{x};\nu)\right\|^2 
\label{eq:ferr1}\\
&=\nu^2\left\|\vec{\psi}(\vec{x};\vec{\theta}) - \nabla\ln\pxsmooth(\vec{x};\nu)\right\|^2
\label{eq:ferr2}
\end{align}
where \eqref{ferr1} follows from \eqref{score_parzen2}
and \eqref{ferr2} follows from \eqref{psi}.
Thus, matching the score is directly related to matching the MMSE denoiser.

Plugging the score approximation \eqref{psi} 
into the fixed-point condition \eqref{fp_parzen}, 
we get
\begin{align}
\frac{1}{\sigma^2}\vec{A}\tran(\vec{A}\hvec{x}-\vec{y}) + \lambda\big( \hvec{x} - \ftheta(\hvec{x}) \big)
&= \vec{0} \text{~for~}\lambda=\frac{1}{\nu}
\label{eq:fpRSS},
\end{align}
which matches the fixed-point condition \eqref{fpRED} of the RED algorithms from \cite{Romano:JIS:17}.
Here we emphasize that $\mc{F}$ may be constructed in such a way that $\ftheta(\cdot)$ has a non-symmetric Jacobian, which is the case for many state-of-the-art denoisers.
Also, $\vec{\theta}$ does not need to be optimized for \eqref{fpRSS} to hold.
Finally, $\pxhat$ need not be the empirical prior \eqref{px_emp}; it can be any chosen prior \cite{Vincent:NC:11}.
Thus, the score-matching-by-denoising (SMD) framework offers an explanation of the RED algorithms from \cite{Romano:JIS:17} that holds for generic denoisers $\ftheta(\cdot)$, whether or not they have symmetric Jacobians, are locally homogeneous, or MMSE.
Furthermore, it suggests a rationale for choosing 
the regularization weight $\lambda$ and, in the context of KDE,
the denoiser variance $\nu$.

\subsection{Relation to Existing Work}
Tweedie's formula \eqref{gradTR} has connections to Stein's Unbiased Risk Estimation (SURE) \cite{Stein:AS:81}, as discussed in, e.g., \cite[Thm.~2]{Luisier:Diss:10} and \cite[Eq.~(2.4)]{Raphan:NC:11}. 
SURE has been used for image denoising in, e.g., \cite{Blu:TIP:07}.
Tweedie's formula was also used in \cite{Bigdeli:17} to interpret autoencoding-based image priors.
In our work, Tweedie's forumula is used to provide an interpretation for the RED algorithms through the construction of the explicit regularizer \eqref{rhoTR} and the approximation of the resulting fixed-point equation \eqref{fp_parzen} via score matching. 

Recently, Alain and Bengio \cite{Alain:JMLR:14} studied the contractive auto-encoders, a type of autoencoder that minimizes squared reconstruction error plus a penalty that tries to make the autoencoder as simple as possible. 
While previous works such as \cite{Ranzato:NIPS:08} conjectured that such auto-encoders minimize an energy function, Alain and Bengio showed that they actually minimize the norm of a score (i.e., match a score to zero).
Furthermore, they showed that, when the coder and decoder do not share the same weights, it is not possible to define a valid energy function because the Jacobian of the reconstruction function is not symmetric.
The results in \cite{Alain:JMLR:14} parallel those in this paper, except that they focus on auto-encoders while we focus on variational image recovery.
Another small difference is that \cite{Alain:JMLR:14} uses the small-$\nu$ approximation
\begin{align}
\fhatmmse(\vec{x}) = \vec{x} + \nu \nabla \ln \pxhat(\vec{x}) + o(\nu),
\end{align}
whereas we use the exact (Tweedie's) relationship \eqref{gradTR}, i.e.,
\begin{align}
\fhatmmse(\vec{x}) = \vec{x} + \nu \nabla \ln \pxsmooth(\vec{x}) ,
\end{align}
where is $\pxsmooth$ the ``Gaussian blurred'' version of $\pxhat$ from \eqref{pr}.

%%%%%%%%%%%%%%%%%%%%%%%%%%%%%%%%%%%%%%%%%%%%%%%%%%%%%%%%%%%%%%%%%%%%%%%%%%%%%%%
\section{Fast RED Algorithms} \label{sec:algorithms}

In \cite{Romano:JIS:17}, Romano et al.\ proposed several ways 
to solve the fixed-point equation \eqref{fpRED}.
Throughout our paper, we have been referring to these methods as ``RED algorithms.''
In this section, we provide new interpretations of the RED-ADMM and RED-FP algorithms from \cite{Romano:JIS:17} and we propose new RED algorithms based on 
accelerated proximal gradient methods.

\subsection{RED-ADMM}

The ADMM approach was summarized in \algref{ADMM} for an arbitrary regularizer $\rho(\cdot)$.
To apply ADMM to RED, \lineref{ADMM_v_update} of \algref{ADMM}, known as the ``proximal update,'' must be specialized to the case where $\rho(\cdot)$ obeys \eqref{gradREDromano} for some denoiser $\vec{f}(\cdot)$.
To do this, Romano et al.\ \cite{Romano:JIS:17} proposed the following.
Because $\rho(\cdot)$ is differentiable, the proximal solution $\vec{v}_k$ must obey the fixed-point relationship
\begin{align}
\vec{0}
&= \lambda \nabla\rho(\vec{v}_k) + \beta (\vec{v}_k - \vec{x}_k - \vec{u}_{k-1}) \\
&= \lambda \big(\vec{v}_k-\vec{f}(\vec{v}_k)\big) + \beta (\vec{v}_k - \vec{x}_k - \vec{u}_{k-1}) \\
\Leftrightarrow~
\vec{v}_k
&= \frac{\lambda}{\lambda+\beta} \vec{f}(\vec{v}_k) + \frac{\beta}{\lambda+\beta} (\vec{x}_k + \vec{u}_{k-1}) 
\label{eq:proxRED} .
\end{align}
An approximation to $\vec{v}_k$ can thus be obtained by iterating 
\begin{align}
\vec{z}_{i}
&= \frac{\lambda}{\lambda+\beta} \vec{f}(\vec{z}_{i-1}) + \frac{\beta}{\lambda+\beta} (\vec{x}_k + \vec{u}_{k-1}) 
\label{eq:proxREDinexact} 
\end{align}
over $i=1,\dots,I$ with sufficiently large $I$, initialized at $\vec{z}_0=\vec{v}_{k-1}$. 
This procedure is detailed in lines~\ref{line:RED_ADMM_z_init}-\ref{line:RED_ADMM_z_end} of \algref{RED_ADMM}.
The overall algorithm is known as RED-ADMM. 

%%%%%%%%%%%%%%%%%%%%%%%%%%%%%%%%%%%%%%%%%%%%%%%%%%%%%%%
\begin{algorithm}[t]
\caption{RED-ADMM with $I$ Inner Iterations\cite{Romano:JIS:17}}
\begin{algorithmic}[1] \label{alg:RED_ADMM}
\REQUIRE{$\ell(\cdot;\vec{y}),\vec{f}(\cdot),
          \beta,\lambda,
          \vec{v}_0,\vec{u}_0,K$,
          and $I$} \label{line:RED_ADMM_init}
\FOR{$k = 1,2,\dots,K$}
 	\STATE{$\vec{x}_{k}=\arg\min_{\vec{x}}\{\ell(\vec{x};\vec{y})+\frac{\beta}{2}\norm{\vec{x}-\vec{v}_{k-1}+\vec{u}_{k-1}}^2\}$}\label{line:RED_ADMM_x_update}
 	\STATE{$\vec{z}_0 = \vec{v}_{k-1}$}\label{line:RED_ADMM_z_init}
 	\FOR{$i=1,2,\dots,I$}
 		\STATE{$\vec{z}_i=\frac{\lambda}{\lambda+\beta} \vec{f}(\vec{z}_{i-1})+\frac{\beta}{\lambda+\beta}(\vec{x}_k+\vec{u}_{k-1})$}\label{line:RED_ADMM_z_update}
 	\ENDFOR \label{line:RED_ADMM_z_end}
 	\STATE{$\vec{v}_k = \vec{z}_{I}$}\label{line:RED_ADMM_v_update}
 	\STATE{$\vec{u}_k = \vec{u}_{k-1}+\vec{x}_k-\vec{v}_k$}\label{line:RED_ADMM_u_update}
\ENDFOR
\STATE{Return $\vec{x}_K$}
\end{algorithmic}
\end{algorithm}
%%%%%%%%%%%%%%%%%%%%%%%%%%%%%%%%%%%%%%%%%%%%%%%%%%%%%%%
 
\subsection{Inexact RED-ADMM}

\algref{RED_ADMM} gives a faithful implementation of ADMM when the number of inner iterations, $I$, is large.
But using many inner iterations may be impractical when the denoiser is computationally expensive, as in the case of BM3D or TNRD.
Furthermore, the use of many inner iterations may not be necessary.

For example, \figref{ADMM_time_log} plots PSNR trajectories versus runtime for TNRD-based RED-ADMM with $I=1,2,3,4$ inner iterations. 
For this experiment, we used the deblurring task described in \secref{compare}, but similar behaviors can be observed in other applications of RED.
\Figref{ADMM_time_log} suggests that $I=1$ inner iterations gives the fastest convergence. 
Note that \cite{Romano:JIS:17} also used $I=1$ when implementing RED-ADMM.

\putFrag{ADMM_time_log}
	{PSNR versus runtime for RED-ADMM with TNRD denoising and $I$ inner iterations.}
	{\figsize}
  	{\psfrag{time (sec)}[t][t][0.7]{\sf time (sec)}
	 \psfrag{psnr (dB)}[b][b][0.7]{\sf PSNR}
	 \psfrag{I = 1}[l][l][0.6]{\sf $I=1$}
	 \psfrag{I = 2}[l][l][0.6]{\sf $I=2$}
	 \psfrag{I = 3}[l][l][0.6]{\sf $I=3$}
	 \psfrag{I = 4}[l][l][0.6]{\sf $I=4$} }
        {trim=20pt 10pt 20pt 20pt}
 
%%%%%%%%%%%%%%%%%%%%%%%%%%%%%%%%%%%%%%%%%%%%%%%%%%%%%%%
\begin{algorithm}[t]
\caption{RED-ADMM with $I=1$}
\begin{algorithmic}[1] \label{alg:RED_ADMM_inexact}
\REQUIRE{$\ell(\cdot;\vec{y}),\vec{f}(\cdot),
          \beta,\lambda,
          \vec{v}_0,\vec{u}_0$,
          and $K$} \label{line:ADMM2_init}
\FOR{$k = 1,2,\dots,K$}
 	\STATE{$\vec{x}_{k}=\arg\min_{\vec{x}}\{\ell(\vec{x};\vec{y})+\frac{\beta}{2}\norm{\vec{x}-\vec{v}_{k-1}+\vec{u}_{k-1}}^2\}$}\label{line:ADMM2_x_update}
 	\STATE{$\vec{v}_k = \frac{\lambda}{\lambda+\beta} \vec{f}(\vec{v}_{k-1})+\frac{\beta}{\lambda+\beta}(\vec{x}_k+\vec{u}_{k-1})$}\label{line:ADMM2_v_update}
 	\STATE{$\vec{u}_k = \vec{u}_{k-1}+\vec{x}_k-\vec{v}_k$}\label{line:ADMM2_u_update}
\ENDFOR
\STATE{Return $\vec{x}_K$}
\end{algorithmic}
\end{algorithm}
%%%%%%%%%%%%%%%%%%%%%%%%%%%%%%%%%%%%%%%%%%%%%%%%%%%%%%%  
 
With $I=1$ inner iterations, RED-ADMM simplifies down to the 3-step iteration summarized in \algref{RED_ADMM_inexact}. 
Since \algref{RED_ADMM_inexact} looks quite different than standard ADMM (recall \algref{ADMM}), one might wonder whether there exists another interpretation of \algref{RED_ADMM_inexact}.
Noting that \lineref{ADMM2_v_update} can be rewritten as
\begin{align}
\vec{v}_k 
&=\vec{v}_{k-1}-\frac{1}{\lambda+\beta}\big[\lambda
\nabla\rho(\vec{v}_{k-1})
+\beta(\vec{v}_{k-1}-\vec{x}_k-\vec{u}_{k-1})\big] \\
&=\vec{v}_{k-1}-\frac{1}{\lambda+\beta} \nabla \left[ \lambda \rho(\vec{v}) 
        + \frac{\beta}{2}\|\vec{v} - \vec{x}_k-\vec{u}_{k-1}\|^2 \right]
        _{\vec{v}=\vec{v}_{k-1}}
\end{align}
we see that the $I=1$ version of inexact RED-ADMM replaces the proximal step with a gradient-descent step under stepsize $1/(\lambda+\beta)$.
Thus the algorithm is reminiscent of the proximal gradient (PG) algorithm \cite{Beck:Chap:09,Combettes:Chap:11}.
We will discuss PG further in the sequel.

\subsection{Majorization-Minimization and Proximal-Gradient RED} \label{sec:MM}

We now propose a proximal-gradient approach inspired by \emph{majorization minimization} (MM) \cite{Sun:TSP:17}.
As proposed in \cite{Figueiredo:TIP:07}, we use a quadratic upper-bound,
\begin{align}
\bar{\rho}(\vec{x};\vec{x}_k)
&\defn \rho(\vec{x}_{k})
        + [\nabla\rho(\vec{x}_{k})]\tran\big(\vec{x}-\vec{x}_{k}\big) 
        + \frac{L}{2}\norm{\vec{x}-\vec{x}_{k}}_2^2 
\label{eq:rhobound} ,
\end{align}
on the regularizer $\rho(\vec{x})$, in place of $\rho(\vec{x})$ itself,
at the $k$th algorithm iteration.
Note that if $\rho(\cdot)$ is convex and $\nabla\rho(\cdot)$ is $L_\rho$-Lipschitz, then $\bar{\rho}(\vec{x};\vec{x}_k)$ ``majorizes'' $\rho(\vec{x})$ at $\vec{x}_k$ when $L\geq L_\rho$, i.e.,
\begin{align}
\bar{\rho}(\vec{x};\vec{x}_k)
&\geq \rho(\vec{x})~\forall \vec{x}\in\mc{X}  \\
\bar{\rho}(\vec{x}_k;\vec{x}_k)
&= \rho(\vec{x}_k) .
\end{align}
The majorized objective can then be minimized using the \emph{proximal gradient} (PG) algorithm \cite{Beck:Chap:09,Combettes:Chap:11} (also known as forward-backward splitting) as follows.
From \eqref{rhobound}, note that the majorized objective can be written as
\begin{eqnarray}
\lefteqn{
\ell(\vec{x};\vec{y})+\lambda\bar{\rho}(\vec{x};\vec{x}_k)
}\nonumber\\
&=& \ell(\vec{x};\vec{y})+ \frac{\lambda L}{2}\left\| 
       \vec{x} - \left(\vec{x}_k- \frac{1}{L}\nabla\rho(\vec{x}_k)\right) 
       \right\|^2 + \text{const} \\
&=& \ell(\vec{x};\vec{y})+ \frac{\lambda L}{2}\bigg\| \vec{x} - 
        \underbrace{
   \bigg(\vec{x}_k- \frac{1}{L}\big(\vec{x}_k-\vec{f}(\vec{x}_k)\big) \bigg) 
        }_{\displaystyle \defn \vec{v}_k}
       \bigg\|^2 + \text{const} ,
\nonumber\\[-4.5mm]
\label{eq:objPG}
\end{eqnarray}
where \eqref{objPG} follows from assuming \eqref{desired}, which is the basis for all RED algorithms.
The RED-PG algorithm then alternately updates $\vec{v}_k$ as per the gradient step in \eqref{objPG} and updates $\vec{x}_{k+1}$ according to the proximal step 
\begin{align}
\vec{x}_{k+1}
&= \arg\min_{\vec{x}} \left\{\ell(\vec{x};\vec{y})+\frac{\lambda L}{2}\norm{\vec{x}-\vec{v}_k}^2 \right\} ,
\end{align}
as summarized in \algref{RED_PG}.
Convergence is guaranteed if $L\geq L_\rho$; see \cite{Beck:Chap:09,Combettes:Chap:11} for details.

%%%%%%%%%%%%%%%%%%%%%%%%%%%%%%%%%%%%%%%%%%%%%%%%%%%%%%%
 \begin{algorithm}[t]
 \caption{RED-PG Algorithm}
 \begin{algorithmic}[1] \label{alg:RED_PG}
 \REQUIRE{$\ell(\cdot;\vec{y}),\vec{f}(\cdot),
          \lambda,
          \vec{v}_0, L>0$,
          and $K$} \label{line:PG_init}
 \FOR{$k = 1,2,\dots,K$}
 	\STATE{$\vec{x}_{k}=\arg\min_{\vec{x}}\{\ell(\vec{x};\vec{y})+\frac{\lambda L}{2}\norm{\vec{x}-\vec{v}_{k-1}}^2\}$} \label{line:PG_x_update}
 	\STATE{$\vec{v}_k = \frac{1}{L}\vec{f}(\vec{x}_k)-\frac{1-L}{L}\vec{x}_{k}$} \label{line:PG_v_update}
 \ENDFOR
 \STATE{Return $\vec{x}_K$}
 \end{algorithmic}
 \end{algorithm}
%%%%%%%%%%%%%%%%%%%%%%%%%%%%%%%%%%%%%%%%%%%%%%%%%%%%%%%

We now show that RED-PG with $L=1$ is identical to the ``fixed point'' (FP) RED algorithm proposed in \cite{Romano:JIS:17}.
First, notice from \algref{RED_PG} that $\vec{v}_k=\vec{f}(\vec{x}_k)$ when $L=1$, in which case
\begin{align}
\vec{x}_k
&= \arg\min_{\vec{x}} \left\{ \ell(\vec{x};\vec{y})+\frac{\lambda}{2}\norm{\vec{x}-\vec{f}(\vec{x}_{k-1})}^2 \right\} 
\label{eq:RED_FP}.
\end{align}
For the quadratic loss
$\ell(\vec{x};\vec{y})=\frac{1}{2\sigma^2}\|\vec{Ax}-\vec{y}\|^2$,
\eqref{RED_FP} becomes
\begin{align}
\vec{x}_k
&= \arg\min_{\vec{x}} \left\{ \frac{1}{2\sigma^2}\|\vec{Ax}-\vec{y}\|^2
        +\frac{\lambda}{2}\norm{\vec{x}-\vec{f}(\vec{x}_{k-1})}^2 \right\} \\
&= \Big(\frac{1}{\sigma^2}\vec{A}\tran\vec{A}+\lambda\vec{I}\Big)^{-1}
        \Big(\frac{1}{\sigma^2}\vec{A}\tran\vec{y} + \lambda\vec{f}(\vec{x}_{k-1})\Big)
\label{eq:RED_FPQ},
\end{align}
which is exactly the RED-FP update \cite[(37)]{Romano:JIS:17}.
Thus, \eqref{RED_FP} generalizes \cite[(37)]{Romano:JIS:17} to possibly non-quadratic\footnote{%
The extension to non-quadratic loss is important for applications like phase-retrieval, where RED has been successfully applied \cite{Metzler:ICML:18}.}
loss $\ell(\cdot;\vec{y})$,
and RED-PG generalizes RED-FP to arbitrary $L>0$.
More importantly, the PG framework facilitates algorithmic acceleration, as we describe below.

The RED-PG and inexact RED-ADMM-$I\!=\!1$ algorithms show interesting similarities: both alternate a proximal update on the loss with a gradient update on the regularization, where the latter term manifests as a convex combination between the denoiser output and another term.
The difference is that RED-ADMM-$I\!=\!1$ includes an extra state variable, $\vec{u}_k$.
The experiments in \secref{compare} suggest that this extra state variable is not necessarily advantageous.

\subsection{Dynamic RED-PG} \label{sec:DPG}

Recalling from \eqref{objPG} that $1/L$ acts as a stepsize in the PG gradient step, it may be possible to speed up PG by decreasing $L$, although making $L$ too small can prevent convergence.
If $\rho(\cdot)$ was known, then a line search could be used, at each iteration $k$,
to find the smallest value of $L$ that guarantees the majorization of $\rho(\vec{x})$ by $\bar{\rho}(\vec{x};\vec{x}_k)$ \cite{Beck:Chap:09}. 
However, with a non-LH or non-JS denoiser, it is not possible to evaluate $\rho(\cdot)$, preventing such a line search.

We thus propose to vary $L_k$ (i.e., the value of $L$ at iteration $k$) according to a fixed schedule.
In particular, we propose to select $L_0$ and $L_\infty$, and smoothly interpolate between them at intermediate iterations $k$.
One interpolation scheme that works well in practice is summarized in \lineref{DPG_Lk_update} of \algref{RED_DPG}.
We refer to this approach as ``dynamic PG'' (DPG).
The numerical experiments in \secref{compare} suggest that, with appropriate selection of $L_0$ and $L_\infty$, RED-DPG can be significantly faster than RED-FP.

%%%%%%%%%%%%%%%%%%%%%%%%%%%%%%%%%%%%%%%%%%%%%%%%%%%%%%%
 \begin{algorithm}[t]
 \caption{RED-DPG Algorithm}
 \begin{algorithmic}[1] \label{alg:RED_DPG}
 \REQUIRE{$\ell(\cdot;\vec{y}),\vec{f}(\cdot),
          \lambda,
          \vec{v}_0, L_0>0, L_\infty>0$,
          and $K$} \label{line:DPG_init}
 \FOR{$k = 1,2,\dots,K$}
 	\STATE{$\vec{x}_{k}=\arg\min_{\vec{x}}\{\ell(\vec{x};\vec{y})+\frac{\lambda L_{k-1}}{2}\norm{\vec{x}-\vec{v}_{k-1}}^2\}$}\label{line:DPG_x_update}
        \STATE{$L_k = \big(\frac{1}{L_\infty} + (\frac{1}{L_0}-\frac{1}{L_\infty})\frac{1}{\sqrt{k+1}}\big)^{-1}$}\label{line:DPG_Lk_update}
 	\STATE{$\vec{v}_k = \frac{1}{L_k}\vec{f}(\vec{x}_k)-\frac{1-L_k}{L_k}\vec{x}_{k}$}\label{line:DPG_v_update}
 \ENDFOR
 \STATE{Return $\vec{x}_K$}
 \end{algorithmic}
 \end{algorithm}
%%%%%%%%%%%%%%%%%%%%%%%%%%%%%%%%%%%%%%%%%%%%%%%%%%%%%%%
 
\subsection{Accelerated RED-PG} \label{sec:APG}

Another well-known approach to speeding up PG is to apply momentum to the $\vec{v}_k$ term in \algref{RED_PG} \cite{Beck:Chap:09}, often known as ``acceleration.''
An accelerated PG (APG) approach to RED is detailed in \algref{RED_APG}. 
There, the momentum in \lineref{APG_z_update} takes the same form as in FISTA \cite{Beck:JIS:09}.
The numerical experiments in \secref{compare} suggest that RED-APG is the fastest among the RED algorithms discussed above.

%%%%%%%%%%%%%%%%%%%%%%%%%%%%%%%%%%%%%%%%%%%%%%%%%%%%%%%
 \begin{algorithm}[t]
 \caption{RED-APG Algorithm}
 \begin{algorithmic}[1] \label{alg:RED_APG}
 \REQUIRE{$\ell(\cdot;\vec{y}),\vec{f}(\cdot),
          \lambda,
          \vec{v}_0, L>0$,
          and $K$} \label{line:APG_init}
 \STATE{$t_0=1$}
 \FOR{$k = 1,2,\dots,K$}
 	\STATE{$\vec{x}_{k}=\arg\min_{\vec{x}}\{\ell(\vec{x};\vec{y})+\frac{\lambda L}{2}\norm{\vec{x}-\vec{v}_{k-1}}^2\}$}\label{line:APG_x_update}
        \STATE{$t_k=\frac{1+\sqrt{1+4t_{k-1}^2}}{2}$}\label{line:APG_t_update}
 	\STATE{$\vec{z}_k = \vec{x}_k + \frac{t_{k-1}-1}{t_k}(\vec{x}_k-\vec{x}_{k-1})$}\label{line:APG_z_update}
 	\STATE{$\vec{v}_k = \frac{1}{L}\vec{f}(\vec{z}_k)-\frac{1-L}{L}\vec{z}_k $}\label{line:APG_v_update}
 \ENDFOR
 \STATE{Return $\vec{x}_K$}
 \end{algorithmic}
 \end{algorithm}
%%%%%%%%%%%%%%%%%%%%%%%%%%%%%%%%%%%%%%%%%%%%%%%%%%%%%%%

By leveraging the principle of vector extrapolation (VE) \cite{Sidi:Book:17},
a different approach to accelerating RED algorithms was recently proposed in \cite{Hong:18}.
Algorithmically, the approach in \cite{Hong:18} is much more complicated than the PG-DPG and PG-APG methods proposed above.
In fact, we have been unable to arrive at an implementation of \cite{Hong:18} that reproduces the results in that paper, and the authors have not been willing to share their implementation with us.
Thus, we cannot comment further on the difference in performance between our PG-DPG and PG-APG schemes and the one in \cite{Hong:18}.

\subsection{Convergence of RED-PG}
Recalling \thmref{impossible}, the RED algorithms do not minimize an explicit cost function but rather seek fixed points of \eqref{fpRED}.
Therefore, it is important to know whether they actually converge to any one fixed point.
Below, we use the theory of non-expansive and $\alpha$-averaged operators to establish the convergence of RED-PG to a fixed point under certain conditions.

First, an operator $\vec{B}(\cdot)$ is said to be \emph{non-expansive} if its Lipschitz constant is at most $1$ \cite{Bauschke:Book:11}.
Next, for $\alpha\in (0,1)$, an operator $\vec{P}(\cdot)$ is said to be \emph{$\alpha$-averaged} if
\begin{align}
\vec{P}(\vec{x}) = \alpha \vec{B} (\vec{x}) + (1 - \alpha) \vec{x} 
\label{eq:alpha_average_def} 
\end{align}
for some non-expansive $\vec{B}(\cdot)$.
Furthermore, if $\vec{P}_1$ and $\vec{P}_2$ are $\alpha_1$ and $\alpha_2$-averaged, respectively, then \cite[Prop.~4.32]{Bauschke:Book:11} establishes that the composition $\vec{P}_2 \circ \vec{P}_1$ is $\alpha$-averaged with
\begin{align}
\alpha = \frac{2}{1+\frac{1}{\max\left\{\alpha_1,\alpha_2\right\}}} 
\label{eq:alpha_composition} .
\end{align}

Recalling RED-PG from \algref{RED_PG}, let us define an operator called $\vec{T}(\cdot)$ that summarizes one algorithm iteration:
\begin{align}
\lefteqn{ \vec{T}(\vec{x}) }\nonumber\\
&\defn \arg\min_{\vec{z}} \Big\{ \ell(\vec{z};\vec{y}) + \tfrac{\lambda L}{2}\big\|\vec{z} - \big(\tfrac{1}{L} \vec{f}(\vec{x})- \tfrac{1-L}{L}\vec{x}\big)\big\|^2 \Big\}
\label{eq:T_minimization} \\
&= \prox_{\ell/(\lambda L)}\big(\tfrac{1}{L}(\vec{f}(\vec{x})-(1-L)\vec{x})\big) \label{eq:T_prox}
\end{align}

\begin{lemma} \label{lem:T_alpha}
If $\ell(\cdot)$ is proper, convex, and continuous; $\vec{f}(\cdot)$ is non-expansive; and $L>1$, then $\vec{T}(\cdot)$ from \eqref{T_prox} is $\alpha$-averaged with $\alpha = \max\{\tfrac{2}{1+L},\tfrac{2}{3}\}$. 
\end{lemma}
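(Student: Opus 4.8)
The plan is to decompose the map $\vec{T}(\cdot)$ in \eqref{T_prox} as a composition of two operators and to show that each is averaged, after which the composition rule \eqref{alpha_composition} yields the claimed constant. Write $\vec{T} = \vec{P}_2 \circ \vec{P}_1$ where $\vec{P}_1(\vec{x}) \defn \tfrac{1}{L}\big(\vec{f}(\vec{x}) - (1-L)\vec{x}\big) = \tfrac{1}{L}\vec{f}(\vec{x}) + \tfrac{L-1}{L}\vec{x}$ is the inner ``RED gradient step'' and $\vec{P}_2 \defn \prox_{\ell/(\lambda L)}$ is the proximal step on the loss.

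First I would handle $\vec{P}_2$. Since $\ell(\cdot)$ is proper, convex, and (lower semi-)continuous, a standard fact from monotone operator theory (\eg \cite[Prop.~12.28]{Bauschke:Book:11}) is that $\prox_g$ is firmly non-expansive, equivalently $\tfrac{1}{2}$-averaged, for any such $g$; this applies to $g = \ell/(\lambda L)$ regardless of the value of $\lambda L > 0$. So $\vec{P}_2$ is $\alpha_2$-averaged with $\alpha_2 = \tfrac{1}{2}$.

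Next I would handle $\vec{P}_1$. Observe $\vec{P}_1(\vec{x}) = \tfrac{1}{L}\vec{f}(\vec{x}) + \big(1 - \tfrac{1}{L}\big)\vec{x}$, which is exactly of the form \eqref{alpha_average_def} with $\vec{B} = \vec{f}$ and averaging parameter $1/L$. Because $L > 1$ we have $1/L \in (0,1)$, and because $\vec{f}(\cdot)$ is non-expansive by hypothesis, $\vec{P}_1$ is $\alpha_1$-averaged with $\alpha_1 = 1/L$. (This is the only place the hypotheses $L>1$ and non-expansiveness of $\vec{f}$ are used, and it is the step most likely to trip on an edge case, \eg one must note that the identity is trivially non-expansive so the convex combination is well-defined; but there is no real obstacle here.)

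Finally, combine: $\alpha_1 = 1/L$ and $\alpha_2 = 1/2$, so $\max\{\alpha_1,\alpha_2\} = \max\{1/L,1/2\}$, and \eqref{alpha_composition} gives that $\vec{T} = \vec{P}_2 \circ \vec{P}_1$ is $\alpha$-averaged with
\begin{align}
\alpha = \frac{2}{1 + \frac{1}{\max\{1/L,\,1/2\}}}.
\end{align}
When $L \leq 2$ this is $\tfrac{2}{1+L}$ and when $L \geq 2$ it is $\tfrac{2}{1 + 2} \cdot \tfrac{3}{2}$... more carefully: if $\max\{1/L,1/2\} = 1/2$ (\ie $L \geq 2$) then $\alpha = \tfrac{2}{1+2} = \tfrac{2}{3}$, and if $\max\{1/L,1/2\} = 1/L$ (\ie $1 < L \leq 2$) then $\alpha = \tfrac{2}{1+L}$. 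Hence $\alpha = \max\{\tfrac{2}{1+L},\tfrac{2}{3}\}$, as claimed. The only real ``obstacle'' is getting the case split in the $\max$ inside \eqref{alpha_composition} to come out as a clean $\max$ of the two candidate values of $\alpha$, which is routine once one checks that $\alpha$ is decreasing in $\max\{\alpha_1,\alpha_2\}^{-1}$, \ie increasing in $\max\{\alpha_1,\alpha_2\}$.
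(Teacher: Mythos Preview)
Your proposal is correct and follows essentially the same route as the paper: decompose $\vec{T}=\vec{P}_2\circ\vec{P}_1$, note that $\vec{P}_2=\prox_{\ell/(\lambda L)}$ is $\tfrac{1}{2}$-averaged and $\vec{P}_1=\tfrac{1}{L}\vec{f}+(1-\tfrac{1}{L})\vec{I}$ is $\tfrac{1}{L}$-averaged, then apply the composition rule \eqref{alpha_composition}. Your added case-split verifying that $\tfrac{2}{1+1/\max\{1/L,1/2\}}=\max\{\tfrac{2}{1+L},\tfrac{2}{3}\}$ is a nice explicit check that the paper omits.
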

\begin{proof}
First, because $\ell(\cdot)$ is proper, convex, and continuous, we know that the proximal operator $\prox_{\ell/(\lambda L)}(\cdot)$ is $\alpha$-averaged with $\alpha=1/2$ \cite{Bauschke:Book:11}. 
Then, by definition, $\frac{1}{L}\vec{f}(\vec{z})-\frac{1-L}{L}\vec{z}$ is $\alpha$-averaged with $\alpha=1/L$.
From \eqref{T_prox}, $\vec{T}(\cdot)$ is the composition of these two $\alpha$-averaged operators, and so from \eqref{alpha_composition} we have that $\vec{T}(\cdot)$ is $\alpha$-averaged with
$\alpha = \max\{\frac{2}{1+L},\frac{2}{3}\}$.
\end{proof}

With \lemref{T_alpha}, we can prove the convergence of RED-PG.

\begin{theorem} \label{thm:mann}
If $\ell(\cdot)$ is proper, convex, and continuous; $\vec{f}(\cdot)$ is non-expansive; $L>1$; and $\vec{T}(\cdot)$ from \eqref{T_prox} has at least one fixed point, then RED-PG converges. 
\end{theorem}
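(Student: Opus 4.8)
The plan is to recognize one step of RED-PG as a single application of the operator $\vec{T}(\cdot)$ defined in \eqref{T_prox}, so that the whole algorithm becomes a Picard iteration of $\vec{T}$, and then to invoke the classical convergence theory for averaged operators via \lemref{T_alpha}. Concretely, I would first unroll \algref{RED_PG}: substituting the $\vec{v}_{k-1}$ update from \lineref{PG_v_update} into the $\vec{x}_k$ update in \lineref{PG_x_update} shows that $\vec{x}_k = \prox_{\ell/(\lambda L)}\big(\tfrac{1}{L}(\vec{f}(\vec{x}_{k-1})-(1-L)\vec{x}_{k-1})\big)$, which is exactly $\vec{T}(\vec{x}_{k-1})$. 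Since $\ell(\cdot;\vec{y})$ is proper, convex, and continuous (hence lower semicontinuous), the proximal map in \eqref{T_prox} is single-valued, so the iteration $\vec{x}_k=\vec{T}(\vec{x}_{k-1})$ is well defined.

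Next, because $L>1$, \lemref{T_alpha} gives that $\vec{T}(\cdot)$ is $\alpha$-averaged with $\alpha=\max\{\tfrac{2}{1+L},\tfrac{2}{3}\}\in(0,1)$. I would then apply the standard Krasnosel'skii--Mann theorem for averaged operators: the Picard iteration of an $\alpha$-averaged operator that possesses at least one fixed point converges (weakly, hence strongly in the finite-dimensional space $\Real^N$) to a fixed point of that operator; see, e.g., \cite[Sec.~5.2]{Bauschke:Book:11}. Since $\vec{T}(\cdot)$ is assumed to have a fixed point, $\{\vec{x}_k\}$ therefore converges to some $\vec{x}^\star$ with $\vec{x}^\star=\vec{T}(\vec{x}^\star)$. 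Writing the first-order optimality condition of the minimization that defines $\vec{T}(\vec{x}^\star)=\vec{x}^\star$, namely $\vec{0}\in\partial\ell(\vec{x}^\star;\vec{y})+\lambda(\vec{x}^\star-\vec{f}(\vec{x}^\star))$, and specializing to the quadratic loss \eqref{quadloss}, recovers the RED fixed-point condition \eqref{fpRED}, so the limit is a genuine RED fixed point.

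I expect the only real content to be the bookkeeping in the first two steps — correctly composing the two algorithm lines into $\vec{T}$ and checking that the averagedness parameter delivered by \lemref{T_alpha} lies strictly in $(0,1)$ exactly when $L>1$. The convergence itself is a black-box application of the averaged-operator theorem, so I anticipate no serious obstacle; the only mild care required is the single-valuedness of the prox and the remark that weak and strong convergence coincide in $\Real^N$.
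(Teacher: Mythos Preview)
Your proposal is correct and mirrors the paper's proof: both identify one RED-PG step as the Picard iteration $\vec{x}_{k}=\vec{T}(\vec{x}_{k-1})$, invoke \lemref{T_alpha} to get $\alpha$-averagedness with $\alpha\in(0,1)$ when $L>1$, and then apply the Krasnosel'ski\u{\i}--Mann convergence result from \cite{Bauschke:Book:11}. Your extra remarks (single-valuedness of the prox, weak equals strong in $\Real^N$, and verification that the limit satisfies \eqref{fpRED}) are fine refinements but not departures from the paper's argument.
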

\begin{proof}
From \eqref{T_prox}, we have that \algref{RED_PG} is equivalent to 
\begin{align}
\vec{x}_{k+1} &= \vec{T}(\vec{x}_k) \label{eq:mann_1}\\
&= \alpha\vec{B}(\vec{x}_k) + (1-\alpha)\vec{x}_k \label{eq:mann_2}
\end{align}
where $\vec{B}(\cdot)$ is an implicit non-expansive operator that must exist under the definition of $\alpha$-averaged operators from \eqref{alpha_average_def}.
The iteration \eqref{mann_2} can be recognized as a Mann iteration \cite{Parikh:FTO:13}, since $\alpha\in(0,1)$.
Thus, from \cite[Thm.~5.14]{Bauschke:Book:11}, $\{\vec{x}_k\}$ is a convergent sequence, in that there exists a fixed point $\vec{x}_\star\in\Real^N$ such that $\lim_{k\to\infty} \|\vec{x}_k - \vec{x}_\star\| = 0$. 
\end{proof}

We note that similar Mann-based techniques were used in \cite{Buzzard:JIS:18,Sun:18} to prove the convergence of PnP-based algorithms.
Also, we conjecture that similar techniques may be used to prove the convergence of other RED algorithms, but we leave the details to future work.
Experiments in \secref{compare} numerically study the convergence behavior of several RED algorithms with different image denoisers $\vec{f}(\cdot)$. 

\subsection{Algorithm Comparison: Image Deblurring} \label{sec:compare}

\putFrag{psnr_tnrd}
 	{PSNR versus iteration for RED algorithms with TNRD denoising when deblurring the starfish.}
 	{\figsize}
        {\newcommand{\sz}{0.6}
 	 \psfrag{iter}[t][t][0.7]{\sf iteration}
	 \psfrag{time (sec)}[t][t][0.7]{\sf time (sec)}
	 \psfrag{psnr}[B][B][0.7]{\sf PSNR}  
	 \psfrag{ADMM I=1}[l][l][\sz]{\sf ADMM-$I\!=\!1$}
	 \psfrag{PRS-ADMM}[l][l][\sz]{\sf PRS-$I\!=\!1$}
	 \psfrag{FP}[l][l][\sz]{\sf FP}
	 \psfrag{GEC}[l][l][\sz]{\sf GEC-$I\!=\!1$}
	 \psfrag{PG-DP}[l][l][\sz]{\sf DPG}
	 \psfrag{PG-FISTA}[l][l][\sz]{\sf APG}
	 \psfrag{DPG}[l][l][\sz]{\sf DPG}
	 \psfrag{APG}[l][l][\sz]{\sf APG}
	 \psfrag{PG}[l][l][\sz]{\sf PG} }
        {trim=20pt 10pt 20pt 20pt}
        
\putFrag{fp_tnrd}
 	{Fixed-point error versus iteration for RED algorithms with TNRD denoising when deblurring the starfish.}
 	{\figsize}
 	 {\newcommand{\sz}{0.6}
 	 \psfrag{iter}[t][t][0.7]{\sf iteration}
	 \psfrag{ADMM I=1}[l][l][\sz]{\sf ADMM-$I\!=\!1$}
	 \psfrag{PG-DP}[l][l][\sz]{\sf DPG}
	 \psfrag{FP}[l][l][\sz]{\sf FP}
	 \psfrag{DPG}[l][l][\sz]{\sf DPG}
	 \psfrag{APG}[l][l][\sz]{\sf APG}
	 \psfrag{PG}[l][l][\sz]{\sf PG}
         \psfrag{fp error}[B][B][0.7]{$\frac{1}{N}\big\|\frac{1}{\sigma^2}\vec{A}^H(\vec{Ax}_k-\vec{y}) + \lambda(\vec{x}_k-\vec{f}(\vec{x}_k))\big\|^2$}}
     	{trim=20pt 10pt 20pt 20pt}

\putFrag{dist_tnrd}
 	{Update distance versus iteration for RED algorithms with TNRD denoising when deblurring the starfish.}
 	{\figsize}
        {\newcommand{\sz}{0.6}
 	 \psfrag{iter}[t][t][0.7]{\sf iteration}
	 \psfrag{time (sec)}[t][t][0.7]{\sf time (sec)}
	 \psfrag{ADMM I=1}[l][l][\sz]{\sf ADMM-$I\!=\!1$}
	 \psfrag{PRS-ADMM}[l][l][\sz]{\sf PRS-$I\!=\!1$}
	 \psfrag{FP}[l][l][\sz]{\sf FP}
	 \psfrag{GEC}[l][l][\sz]{\sf GEC-$I\!=\!1$}
	 \psfrag{PG-FISTA}[l][l][\sz]{\sf APG}
	 \psfrag{DPG}[l][l][\sz]{\sf DPG}
	 \psfrag{APG}[l][l][\sz]{\sf APG}
	 \psfrag{PG}[l][l][\sz]{\sf PG}
	 \psfrag{normalized update dist}[B][B][0.7]{\sf $\frac{1}{N}\|\vec{x}_k-\vec{x}_{k-1}\|^2$} }
        {trim=20pt 10pt 20pt 20pt}        
        
We now compare the performance of the RED algorithms discussed above (i.e., inexact ADMM, FP, DPG, APG, and PG) on the image deblurring problem considered in \cite[Sec.~6.1]{Romano:JIS:17}.
For these experiments, the measurements $\vec{y}$ were constructed using a $9\times 9$ uniform blur kernel for $\vec{A}$ and using AWGN with variance $\sigma^2=2$.
As stated earlier, the image $\vec{x}$ is normalized to have pixel intensities in the range $[0,255]$.

For the first experiment, we used the TNRD denoiser.
The various algorithmic parameters were chosen based on the recommendations in \cite{Romano:JIS:17}:
the regularization weight was $\lambda=0.02$, 
the ADMM penalty parameter was $\beta=0.001$, 
and
the noise variance assumed by the denoiser was $\nu=3.25^2$. 
The proximal step on $\ell(\vec{x};\vec{y})$, given 
in \eqref{RED_FPQ}, was implemented with an FFT.
For RED-DPG we used\footnote{Matlab code for these experiments is available at \url{http://www2.ece.ohio-state.edu/~schniter/RED/index.html}.}
$L_0=0.2$ and $L_\infty=2$, for RED-APG we used $L=1$, and for RED-PG we used $L=1.01$ since \thmref{mann} motivates $L>1$.

\Figref{psnr_tnrd} shows 
$$
\text{PSNR}_k \defn -10\log_{10}\left(\frac{1}{N 256^2}\|\vec{x}-\hvec{x}_k\|^2\right) 
$$
versus iteration $k$ for the starfish test image.
In the figure, the proposed RED-DPG and RED-APG algorithms appear significantly faster than the RED-FP and RED-ADMM-$I\!=\!1$ algorithms proposed in \cite{Romano:JIS:17}.
For example, RED-APG reaches PSNR~$=30$ in $15$ iterations
whereas RED-FP and inexact RED-ADMM-$I=1$ take about
$50$ iterations.

\Figref{fp_tnrd} shows the fixed-point error
\begin{align*}
\frac{1}{N}\bigg\|\frac{1}{\sigma^2}\vec{A}^H(\vec{Ax}_k-\vec{y}) + \lambda(\vec{x}_k-\vec{f}(\vec{x}_k))\bigg\|^2
\end{align*}
verus iteration $k$.
All but the RED-APG and RED-ADMM algorithms appear to converge to the solution set of the fixed-point equation \eqref{fpRED}.
The RED-APG and RED-ADMM algorithms appear to approximately satisfy the fixed-point equation \eqref{fpRED}, but not exactly satisfy \eqref{fpRED}, since the fixed-point error does not decay to zero.

\Figref{dist_tnrd} shows the update distance $\frac{1}{N}\|\vec{x}_k-\vec{x}_{k-1}\|^2$ vs.\ iteration $k$ for the algorithms under test.
For most algorithms, the update distance appears to be converging to zero, but for RED-APG and RED-ADMM it does not.
This suggests that the RED-APG and RED-ADMM algorithms are converging to a limit cycle rather than a unique limit point.

\putFrag{psnr_dwt}
 	{PSNR versus iteration for RED algorithms with TDT denoising when deblurring the starfish.}
 	{3.4}
 	 {\newcommand{\sz}{0.6}
 	 \psfrag{iter}[t][t][0.7]{\sf iteration}
	 \psfrag{ADMM I=1}[l][l][\sz]{\sf ADMM-$I\!=\!1$}
	 \psfrag{FP}[l][l][\sz]{\sf FP}
	 \psfrag{PG-DP}[l][l][\sz]{\sf DPG}
	 \psfrag{DPG}[l][l][\sz]{\sf DPG}
	 \psfrag{PG}[l][l][\sz]{\sf PG}
	 \psfrag{APG}[l][l][\sz]{\sf APG}
         \psfrag{psnr}[B][B][0.7]{\sf PSNR}}
     	{trim=20pt -5pt 20pt 20pt}
     	
\putFrag{fp_dwt}
 	{Fixed-point error versus iteration for RED algorithms with TDT denoising when deblurring the starfish.}
 	{\figsize}
 	 {\newcommand{\sz}{0.6}
 	 \psfrag{iter}[t][t][0.7]{\sf iteration}
	 \psfrag{ADMM I=1}[l][l][\sz]{\sf ADMM-$I\!=\!1$}
	 \psfrag{FP}[l][l][\sz]{\sf FP}
	 \psfrag{PG-DP}[l][l][\sz]{\sf DPG}
	 \psfrag{DPG}[l][l][\sz]{\sf DPG}
	 \psfrag{PG}[l][l][\sz]{\sf PG}
	 \psfrag{APG}[l][l][\sz]{\sf APG}
         \psfrag{fp error}[B][B][0.7]{$\tfrac{1}{N}\big\|\tfrac{1}{\sigma^2}\vec{A}^H(\vec{Ax}_k-\vec{y}) + \lambda(\vec{x}_k-\vec{f}(\vec{x}_k))\big\|^2$}}
     	{trim=20pt 10pt 20pt 20pt}

\putFrag{dist_dwt}
 	{Update distance versus iteration for RED algorithms with TDT denoising when deblurring the starfish.}
 	{\figsize}
 	 {\newcommand{\sz}{0.6}
 	 \psfrag{iter}[t][t][0.7]{\sf iteration}
	 \psfrag{ADMM I=1}[l][l][\sz]{\sf ADMM-$I\!=\!1$}
	 \psfrag{FP}[l][l][\sz]{\sf FP}
	 \psfrag{PG-DP}[l][l][\sz]{\sf DPG}
	 \psfrag{DPG}[l][l][\sz]{\sf DPG}
	 \psfrag{PG}[l][l][\sz]{\sf PG}
	 \psfrag{APG}[l][l][\sz]{\sf APG}
     \psfrag{normalized update dist}[B][B][0.7]{\sf $\tfrac{1}{N}\|\vec{x}_k-\vec{x}_{k-1}\|^2$}}
     	{trim=20pt 10pt 20pt 20pt}

Next, we replace the TNRD denoiser with the TDT denoiser from \eqref{fTD} and repeat the previous experiments.
For the TDT denoiser, we used a Haar-wavelet based orthogonal discrete wavelet transform (DWT) $\vec{W}$, with the maximum number of decomposition levels, and a soft-thresholding function $\vec{g}(\cdot)$ with threshold value $0.001$.
Unlike the TNRD denoiser, this TDT denoiser is the proximal operator associated with a convex cost function, and so we know that it is $\frac{1}{2}$-averaged and non-expansive.

\Figref{psnr_dwt} shows PSNR versus iteration with TDT denoising.
Interestingly, the final PSNR values appear to be nearly identical among all algorithms under test, but more than $1$~dB worse than the values around iteration $20$.
\Figref{fp_dwt} shows the fixed-point error vs.\ iteration for this experiment.
There, the errors of most algorithms converge to a value near $10^{-7}$, but then remain at that value.
Noting that RED-PG satisfies the conditions of \thmref{mann} (i.e., convex loss, non-expansive denoiser, $L>1$), it should converge to a fixed-point of \eqref{fpRED}.
Therefore, we attribute the fixed-point error saturation in \figref{fp_dwt} to issues with numerical precision.
\Figref{dist_dwt} shows the normalized distance versus iteration with TDT denoising.
There, the distance decreases to zero for all algorithms under test.

We emphasize that the 
proposed RED-DPG, RED-APG, and RED-PG algorithms seek to solve exactly the same fixed-point equation \eqref{fpRED} sought by the RED-SD, RED-ADMM, and RED-FP algorithms proposed in \cite{Romano:JIS:17}. 
The excellent quality of the RED fixed-points was firmly established in \cite{Romano:JIS:17},
both qualitatively and quantitatively,
in comparison to existing state-of-the-art methods like PnP-ADMM \cite{Venkatakrishnan:GSIP:13}.
For further details on these comparisons, including examples of images recovered by the RED algorithms, we refer the interested reader to \cite{Romano:JIS:17}.

%%%%%%%%%%%%%%%%%%%%%%%%%%%%%%%%%%%%%%%%%%%%%%%%%%%%%%%%%%%%%%%%%%%%%%%%%%%%%%%
\section{Equilibrium View of RED Algorithms} \label{sec:CE}

Like the RED algorithms, PnP-ADMM \cite{Venkatakrishnan:GSIP:13} repeatedly calls a denoiser $\vec{f}(\cdot)$ in order to solve an inverse problem.
In \cite{Buzzard:JIS:18}, Buzzard, Sreehari, and Bouman show that PnP-ADMM finds a ``consensus equilibrium'' solution rather than a minimum of any explicit cost function.
By consensus equilibrium, we mean a solution $(\hvec{x},\hvec{u})$ to 
\begin{subequations}\label{eq:consensus}
\begin{align}
\hvec{x} = F(\hvec{x}+\hvec{u}) 
\label{eq:F} \\
\hvec{x} = G(\hvec{x}-\hvec{u}) 
\label{eq:G} 
\end{align}
\end{subequations}
for some functions $F,G:\Real^N\rightarrow\Real^N$.
For PnP-ADMM, these functions are \cite{Buzzard:JIS:18}
\begin{align}
F\pnp(\vec{v})
&= \arg\min_{\vec{x}} \left\{\ell(\vec{x};\vec{y}) + \frac{\beta}{2}\|\vec{x}-\vec{v}\|^2\right\} 
\label{eq:Fpnp} \\
G\pnp(\vec{v})
&= \vec{f}(\vec{v})
\label{eq:Gpnp} .
\end{align}

\subsection{RED Equilibrium Conditions}

We now show that the RED algorithms also find consensus equilibrium solutions, but with $G\neq G\pnp$.
First, recall ADMM \algref{ADMM} with explicit regularization $\rho(\cdot)$. 
By taking iteration $k\rightarrow\infty$, it becomes clear that the ADMM solutions must satisfy the equilibrium condition \eqref{consensus} with 
\begin{align}
F\admm(\vec{v})
&= \arg\min_{\vec{x}} \left\{\ell(\vec{x};\vec{y}) + \frac{\beta}{2}\|\vec{x}-\vec{v}\|^2\right\} 
\label{eq:Fadmm} \\
G\admm(\vec{v})
&= \arg\min_{\vec{x}} \left\{\lambda\rho(\vec{x}) + \frac{\beta}{2}\|\vec{x}-\vec{v}\|^2\right\} 
\label{eq:Gadmm} ,
\end{align}
where we note that $F\admm=F\pnp$.

The RED-ADMM algorithm can be considered as a special case of ADMM \algref{ADMM} under which
$\rho(\cdot)$ is differentiable with $\nabla\rho(\vec{x})=\vec{x}-\vec{f}(\vec{x})$, for a given denoiser $\vec{f}(\cdot)$. 
We can thus find $G\redadmm(\cdot)$, i.e., the RED-ADMM version of $G(\cdot)$ satisfying the equilibrium condition \eqref{G}, by solving the right side of \eqref{Gadmm} under  $\nabla\rho(\vec{x})=\vec{x}-\vec{f}(\vec{x})$.
Similarly, we see that the RED-ADMM version of $F(\cdot)$ is identical to the ADMM version of $F(\cdot)$ from \eqref{Fadmm}.
Now, the $\hvec{x}=G\redadmm(\vec{v})$ that solves the right side of \eqref{Gadmm} under differentiable $\rho(\cdot)$ with $\nabla\rho(\vec{x})=\vec{x}-\vec{f}(\vec{x})$ must obey
\begin{align}
\vec{0} 
&= \lambda \nabla \rho(\hvec{x}) + \beta(\hvec{x}-\vec{v}) \\
&= \lambda \big(\hvec{x}-\vec{f}(\hvec{x})\big)  + \beta(\hvec{x}-\vec{v}) 
\label{eq:denoiser_fp} ,
\end{align}
which we note is a special case of \eqref{fpRED}.
Continuing, we find that
\begin{align}
\vec{0}
&= \lambda \big(\hvec{x}-\vec{f}(\hvec{x})\big)  + \beta(\hvec{x}-\vec{v}) \\
\Leftrightarrow
\vec{0} 
&= \frac{\lambda+\beta}{\beta}\hvec{x} - \frac{\lambda}{\beta} \vec{f}(\hvec{x}) -\vec{v} \\
\Leftrightarrow
\vec{v} 
&= \left(\frac{\lambda+\beta}{\beta}\vec{I} - \frac{\lambda}{\beta} \vec{f}\right)(\hvec{x}) \\
\Leftrightarrow
\hvec{x} 
&= \left( \frac{\lambda+\beta}{\beta}\vec{I}-\frac{\lambda}{\beta}\vec{f} \right)^{-1}(\vec{v}) 
= G\redadmm(\vec{v}) 
\label{eq:G_RED_ADMM},
\end{align}
where $\vec{I}$ represents the identity operator and $(\cdot)^{-1}$ represents the functional inverse.
In summary, RED-ADMM with denoiser $\vec{f}(\cdot)$ solves the consensus equilibrium problem \eqref{consensus} with $F=F\admm$ from \eqref{Fadmm} and $G=G\redadmm$ from \eqref{G_RED_ADMM}.

Next we establish an equilibrium result for RED-PG.
Defining $\vec{u}_k=\vec{v}_k-\vec{x}_k$
and taking $k\rightarrow\infty$ in \algref{RED_PG}, it can be seen that the fixed points of RED-PG obey \eqref{F} for
\begin{align}
F\redpg(\vec{v})
&= \arg\min_{\vec{x}} \left\{\ell(\vec{x};\vec{y}) + \frac{\lambda L}{2}\|\vec{x}-\vec{v}\|^2\right\} 
\label{eq:Fredpg} .
\end{align}
Furthermore, from \lineref{PG_v_update} of \algref{RED_PG}, it can be seen that the RED-PG fixed points also obey
\begin{align}
\hvec{u}
&= \frac{1}{L}\left( \vec{f}(\hvec{x}) - \hvec{x} \right) 
\label{eq:ured}\\
\Leftrightarrow
\hvec{x}-\hvec{u}
&= \hvec{x} - \frac{1}{L}\left( \vec{f}(\hvec{x})-\hvec{x} \right) \\
&= \left(\frac{L+1}{L}\vec{I} - \frac{1}{L} \vec{f}\right)(\hvec{x}) \\
\Leftrightarrow
\hvec{x}
&= \left(\frac{L+1}{L}\vec{I} - \frac{1}{L} \vec{f}\right)^{-1}(\hvec{x}-\hvec{u}) ,
\end{align}
which matches \eqref{G} when $G=G\redpg$ for
\begin{align}
G\redpg(\vec{v})
&= \left(\frac{L+1}{L}\vec{I} - \frac{1}{L} \vec{f}\right)^{-1}(\vec{v}) 
\label{eq:Gredpg} .
\end{align}
Note that $G\redpg=G\redadmm$ when $L=\beta/\lambda$.

\subsection{Interpreting the RED Equilibria}

The equilibrium conditions provide additional interpretations of the RED algorithms.
To see how, first recall that the RED equilibrium $(\hvec{x},\hvec{u})$ satisfies
\begin{subequations}
\begin{align}
\hvec{x} &= F\redpg(\hvec{x}+\hvec{u}) 
\label{eq:xequilibRED}\\
\hvec{x} &= G\redpg(\hvec{x}-\hvec{u}) ,
\end{align}
\end{subequations}
or an analogous pair of equations involving $F\redadmm$ and $G\redadmm$.
Thus, from \eqref{Fredpg}, \eqref{ured}, and \eqref{xequilibRED}, we have that 
\begin{align}
\hvec{x} 
&= F\redpg\left(\hvec{x}+\frac{1}{L}(\vec{f}(\hvec{x})-\hvec{x})\right) \\
&= F\redpg\left(\frac{L-1}{L}\hvec{x}+\frac{1}{L}\vec{f}(\hvec{x})\right) \\
&= \arg\min_{\vec{x}} \left\{\ell(\vec{x};\vec{y}) + \frac{\lambda L}{2}\left\|\vec{x}-\frac{L-1}{L}\hvec{x}-\frac{1}{L}\vec{f}(\hvec{x})\right\|^2\right\} .
\end{align}
When $L=1$, this simplifies down to
\begin{align}
\hvec{x} 
&= \arg\min_{\vec{x}} \left\{\ell(\vec{x};\vec{y}) + \frac{\lambda}{2}\left\|\vec{x}-\vec{f}(\hvec{x})\right\|^2\right\} 
\label{eq:xequilibRED2}.
\end{align}
Note that \eqref{xequilibRED2} is reminiscent of, although in general not equivalent to,
\begin{align}
\hvec{x} 
&= \arg\min_{\vec{x}} \left\{\ell(\vec{x};\vec{y}) + \frac{\lambda}{2}\left\|\vec{x}-\vec{f}(\vec{x})\right\|^2\right\} ,
\end{align}
which was discussed as an ``alternative'' formulation of RED in \cite[Sec.~5.2]{Romano:JIS:17}.

Insights into the relationship between RED and PnP-ADMM can be obtained by focusing on the simple case of 
\begin{align}
\ell(\vec{x};\vec{y})
&= \frac{1}{2\sigma^2}\|\vec{x}-\vec{y}\|^2 
\label{eq:simple} ,
\end{align}
where the overall goal of variational image recovery would be the denoising of $\vec{y}$.
For PnP-ADMM, \eqref{RED_FPQ} and \eqref{Fpnp} imply
\begin{align}
F\pnp(\vec{v})
&= \frac{1}{1+\lambda\sigma^2}\vec{y} 
   + \frac{\lambda\sigma^2}{1+\lambda\sigma^2}\vec{v} ,
\end{align}
and so the equilibrium condition \eqref{F} implies
\begin{align}
\hvec{x}\pnp
&= \frac{1}{1+\lambda\sigma^2}\vec{y} 
   + \frac{\lambda\sigma^2}{1+\lambda\sigma^2}(\hvec{x}\pnp+\hvec{u}\pnp) \\
\Leftrightarrow
\hvec{u}\pnp
&= \frac{\hvec{x}\pnp-\vec{y}}{\lambda\sigma^2} .
\end{align}
Meanwhile, \eqref{Gpnp} and the equilibrium condition \eqref{G} imply
\begin{align}
\hvec{x}\pnp
&= \vec{f}(\hvec{x}\pnp-\hvec{u}\pnp) \\
&= \vec{f}\left(\frac{\lambda\sigma^2-1}{\lambda\sigma^2}\hvec{x}\pnp
                +\frac{1}{\lambda\sigma^2}\vec{y}\right) .
\end{align}
In the case that $\lambda=1/\sigma^2$, we have the intuitive result that
\begin{align}
\hvec{x}\pnp
&= \vec{f}(\vec{y}) 
\label{eq:equilib_pnp},
\end{align}
which corresponds to direct denoising of $\vec{y}$.
For RED, $\hvec{u}\RED$ is algorithm dependent, but $\hvec{x}\RED$ is always the solution to \eqref{fpRED}, where now $\vec{A}=\vec{I}$ due to \eqref{simple}.
That is,
\begin{align}
\vec{y} -\hvec{x}\RED
&= \lambda\sigma^2\big(\hvec{x}\RED - \vec{f}(\hvec{x}\RED)\big) .
\end{align}
Taking $\lambda=1/\sigma^2$ for direct comparison to \eqref{equilib_pnp}, we find
\begin{align}
\vec{y} -\hvec{x}\RED
&= \hvec{x}\RED - \vec{f}(\hvec{x}\RED) 
\label{eq:equilib_red} .
\end{align}
Thus, whereas PnP-ADMM reports the denoiser output $\vec{f}(\vec{y})$,
\emph{RED reports the $\hvec{x}$ for which the denoiser residual $\vec{f}(\hvec{x}) - \hvec{x}$ negates the measurement residual $\vec{y} - \hvec{x}$.}
This $\hvec{x}$ can be expressed concisely as
\begin{align}
\hvec{x} 
= (2\vec{I}-\vec{f})^{-1}(\vec{y})
= G\redpg(\vec{y})\big|_{L=1} .
\end{align}

%%%%%%%%%%%%%%%%%%%%%%%%%%%%%%%%%%%%%%%%%%%%%%%%%%%%%%%%%%%%%%%%%%%%%%%%%%%%%%%
\section{Conclusion} \label{sec:conclusion}

The RED paper \cite{Romano:JIS:17} proposed a powerful new way to exploit plug-in denoisers when solving imaging inverse-problems.
In fact, experiments in \cite{Romano:JIS:17} suggest that the RED algorithms are state-of-the-art.
Although \cite{Romano:JIS:17} claimed that the RED algorithms minimize an optimization objective containing an explicit regularizer of the form $\rhoRED(\vec{x})\defn\frac{1}{2}\vec{x}\tran(\vec{x}-\vec{f}(\vec{x}))$ when the denoiser is LH, we showed that the denoiser must also be Jacobian symmetric for this explanation to hold.
We then provided extensive numerical evidence that practical denoisers like
the median filter, 
non-local means,
BM3D,
TNRD,
or DnCNN
lack sufficient Jacobian symmetry.
Furthermore, we established that, with non-JS denoisers, the RED algorithms cannot be explained by explicit regularization of any form.

None of our negative results dispute the fact that the RED algorithms work very well in practice.
But they do motivate the need for a better understanding of RED.
In response, we showed that the RED algorithms can be explained by a novel framework called \emph{score-matching by denoising} (SMD), which aims to match the ``score'' (i.e., the gradient of the log-prior) rather than design any explicit regularizer.
We then established tight connections between SMD, kernel density estimation, and constrained MMSE denoising.

On the algorithmic front, 
we provided new interpretations of the RED-ADMM and RED-FP algorithms proposed in \cite{Romano:JIS:17}, and we proposed novel RED algorithms with much faster convergence.
Finally, we performed a consensus-equilibrium analysis of the RED algorithms that lead to additional interpretations of RED and its relation to PnP-ADMM.

%%%%%%%%%%%%%%%%%%%%%%%%%%%%%%%%%%%%%%%%%%%%%%%%%%%%%%%%%%%%%%%%%%%%%%%%%%%%%%
\section*{Acknowledgments}

The authors thank Peyman Milanfar, Miki Elad, Greg Buzzard, and Charlie Bouman  for insightful discussions.

%%%%%%%%%%%%%%%%%%%%%%%%%%%%%%%%%%%%%%%%%%%%%%%%%%%%%%%%%%%%%%%%%%%%%%%%%%%%%%%

%%%%%%%%%%%%%%%%%%%%%%%%%%%%%%%%%%%%%%%%%%%%%%%%%%%%%%%%%%%%%%%%%%%%%%%%%%%%%%%
\bibliographystyle{ieeetr}
%\bibliography{macros_abbrev,books,RED}
\bibliography{macros_abbrev,books,misc,machine,sparse,phase} % in /bib of group SVN

\end{document}